\documentclass{article}
\usepackage[utf8]{inputenc}
\usepackage[T1]{fontenc}
\usepackage{times}
\usepackage{helvet}
\usepackage{courier}

\usepackage{amsmath}
\usepackage{amssymb}
\usepackage{amsfonts}

\usepackage{booktabs}
\usepackage{multirow}
\usepackage{makecell}
\usepackage{array}
\usepackage{tabularx}

\usepackage{graphicx}
\usepackage{xcolor}
\usepackage{caption}

\usepackage{algorithm}
\usepackage{algorithmic}
\usepackage{enumitem}
\usepackage{tcolorbox}
\usepackage{float}
\usepackage{nicefrac}
\usepackage{microtype}
\usepackage{mathtools}

\usepackage{url}
\usepackage[colorlinks=true, linkcolor=blue, citecolor=blue, urlcolor=blue]{hyperref}

\usepackage{tikz}

\usepackage[numbers,square]{natbib}

\usepackage{arxiv}

\usepackage{amsthm}
\newtheorem{theorem}{Theorem}
\newtheorem{lemma}{Lemma}
\newtheorem{assumption}{Assumption}
\newtheorem{proposition}{Proposition}

\title{SphUnc: Hyperspherical Uncertainty Decomposition and Causal Identification via Information Geometry}

\author{
    Rong Fu \\
    Independent Researcher \\
    Corresponding author \and
    Chunlei Meng \\
    Independent Researcher \and
    Jinshuo Liu \\
    Independent Researcher \and
    Dianyu Zhao \\
    Independent Researcher \and
    Yongtai Liu \\
    Independent Researcher \and
    Yibo Meng \\
    Independent Researcher \and
    Xiaowen Ma \\
    Independent Researcher \and
    Wangyu Wu \\
    Independent Researcher \and
    Yangchen Zeng \\
    Independent Researcher \and
    Shuaishuai Cao \\
    Independent Researcher \and
    Simon Fong \\
    Independent Researcher
}

\renewcommand{\headeright}{}
\renewcommand{\undertitle}{}
\renewcommand{\shorttitle}{SphUnc}

\hypersetup{
    pdftitle={SphUnc: Hyperspherical Uncertainty Decomposition and Causal Identification via Information Geometry},
    pdfsubject={cs.LG, cs.AI}, 
    pdfauthor={Rong Fu, Chunlei Meng, Jinshuo Liu, Dianyu Zhao, Yongtai Liu, Yibo Meng, Xiaowen Ma, Wangyu Wu, Yangchen Zeng, Shuaishuai Cao, Simon Fong},
    pdfkeywords={hyperspherical embeddings; uncertainty decomposition; hypergraph neural networks; causal discovery},
    pdfstartview={FitH},
    colorlinks=true,
    linkcolor=red,
    citecolor=green,
    filecolor=magenta,
    urlcolor=cyan
}

\begin{document}
\maketitle

\begin{abstract}
Reliable decision-making in complex multi-agent systems requires calibrated predictions and interpretable uncertainty. We introduce SphUnc, a unified framework combining hyperspherical representation learning with structural causal modeling. The model maps features to unit hypersphere latents using von Mises-Fisher distributions, decomposing uncertainty into epistemic and aleatoric components through information-geometric fusion. A structural causal model on spherical latents enables directed influence identification and interventional reasoning via sample-based simulation. Empirical evaluations on social and affective benchmarks demonstrate improved accuracy, better calibration, and interpretable causal signals, establishing a geometric-causal foundation for uncertainty-aware reasoning in multi-agent settings with higher-order interactions. 
\end{abstract}

\keywords{hyperspherical embeddings; uncertainty decomposition; hypergraph neural networks; causal discovery}

\section{Introduction}
Uncertainty is an intrinsic feature of multi-agent systems where latent traits, fleeting intentions and evolving group dynamics jointly determine observed behaviour. Effective modelling in such contexts demands approaches that not only produce accurate predictions but also provide calibrated measures of confidence. Traditional graph-based representations focus on pairwise links and therefore miss many multi-party interaction patterns that occur in real social and collaborative networks \cite{zhang2023higher,liu2025higher,kim2024survey}. When higher order relations are ignored, models are prone to conflate mere correlation with genuine influence and to generate misleading reliability estimates.

The choice of latent geometry plays a central role for credible uncertainty quantification. Directional embeddings constrained to the unit hypersphere naturally encode orientation-dominant semantics and allow concentration-sensitive uncertainty measures. Recent methods for hyperspherical message passing and von Mises Fisher modelling exploit angular structure to represent concentrated beliefs and to derive geometry-aware uncertainty primitives \cite{banerjee2005clustering,liu2021spherical,liu2022spherical}. At the same time, information-theoretic decompositions of predictive uncertainty into epistemic and aleatoric components have attracted renewed attention, yet important questions remain about reliably separating these contributions and calibrating the resulting estimates in practice \cite{chan2024estimating,wimmer2023quantifying,schweighofer2023introducing}. Models that ignore manifold geometry risk producing inconsistent uncertainty assessments and poor calibration.

Causal structure and interventional reasoning are essential whenever one aims to predict the effect of deliberate manipulations or to distinguish peer influence from homophily. Time-series tools such as Granger causality are practical for uncovering directed temporal dependencies, but their direct application to observed features can be confounded by latent directional structure and by higher order group interactions \cite{balashankar2023learning,ma2021causal,jiang2023causal}. This raises the need for methods that jointly learn geometry-aware embeddings and causal relations so that interventional queries can be answered in a meaningful and identifiable way.

To address these gaps, we propose SphUnc, a unified architecture that tightly couples hyperspherical representation learning with structural causal modelling. Observed node features are projected to unit-norm latents; von Mises Fisher concentration heads quantify directional epistemic dispersion while an aleatoric variance head models irreducible noise. A spherical structural causal model operates on these latents and supports interventional simulation by fixing subsets of spherical states and propagating the induced changes forward. The learning objective explicitly balances predictive fit, entropy calibration and causal fidelity, and practical identifiability is promoted through temporal ordering constraints and sparsity priors.

Our contributions are as follows. First, we formulate a hyperspherical uncertainty decomposition that treats von Mises Fisher concentration as an interpretable measure of directional epistemic uncertainty and complements it with a learned aleatoric variance estimator. Second, we design a joint training procedure that aligns latent geometry, uncertainty calibration and causal structure discovery, ensuring that representation learning is directly informed by downstream interventional and calibration objectives. Third, we propose operational identifiability conditions together with an evaluation protocol that reports both interventional entropy and a confidence score quantifying posterior concentration over candidate structures. Finally, we demonstrate through extensive experiments on social and affective benchmarks that the proposed approach improves predictive accuracy, yields substantially better calibration, and recovers interpretable directed influences relative to strong baselines.

% -------------------------
% Conclusion
% -------------------------

\section{Related Work}
\label{sec:related_work}

\subsection{Hyperspherical representations and vMF modeling}
Directional or angular embeddings have been proposed to capture tasks in which orientation matters more than vector magnitude. A line of work adopts von Mises–Fisher priors or spherical latent spaces to improve representation and downstream clustering or retrieval quality \cite{yang2021deep,liu2025ma}. Recent architectures extend angular message passing and spherical attention to encode relational signals on the sphere \cite{smerkous2024enhancing,zhou2022survey}. Those approaches demonstrate that manifold-aware treatment can yield more stable features and improved OOD behaviour when compared to Euclidean embeddings. SphUnc builds on these insights by pairing vMF parameterizations with a concentration-aware entropy primitive, and by integrating such directional beliefs into downstream causal modules.

\subsection{Uncertainty decomposition: epistemic and aleatoric perspectives}
A large body of work studies how to quantify and separate reducible and irreducible uncertainty in machine learning models. Bayesian approximations, ensemble constructions, and evidential methods each provide mechanisms for estimating epistemic uncertainty \cite{kuleshov2022calibrated,chan2024estimating}. Other lines emphasize calibration, re-calibration and density-based recalibration to ensure predicted distributions are trustworthy \cite{kuleshov2022calibrated}. Applications that explicitly model both uncertainty types include predictive maintenance and multimodal fusion for perception where aleatoric variance and epistemic ignorance have distinct operational meanings \cite{cao2023stochastic,shao2025ua}. Recent works also exploit hyperspherical geometry for uncertainty-aware models, for example by parametrizing directional uncertainty with vMF-like densities in robotics and grasping \cite{shi2025vmf}. SphUnc advances this agenda by proposing a principled decomposition on the sphere together with a monotone fusion mapping that preserves interpretability across scales.

\subsection{Higher-order relational models and hypergraphs}
Modeling interactions that involve more than two entities motivates hypergraph representations. Surveys and recent technical contributions show that hypergraphs capture multi-party dependencies that pairwise graphs miss \cite{zhang2022hypergraph,kim2024survey}. Methods that adapt neural message passing to hyperedges or that design spectral hypergraph layers improve predictive power on relational tasks \cite{huang2023hyperdne,liu2025higher}. SphUnc leverages hypergraph structure to represent group interactions while aligning messages with angular affinities; this design enables a coherent integration of higher-order topology and directional semantics.

\subsection{Causal discovery and interventional analysis in networks}
Causal inference on networks and in multi-agent systems remains challenging due to confounding, homophily and temporal dependencies. Several studies propose embedding-based adjustments for peer effects and contagion estimation \cite{cristali2022using} and recent methods combine Granger-style tests with graph neural modules to recover directed influence \cite{harit2025news,harit2025causal}. More generally, geodesic and manifold-aware causal frameworks extend identification ideas to non-Euclidean outcome spaces \cite{kurisu2024geodesic}. SphUnc complements these contributions by learning sparse directed structure on spherical latents and by providing practical interventional simulation and an associated causal-uncertainty measure that reflects identification uncertainty when structure learning is imperfect.

\subsection{Uncertainty-aware applications and robustness}
Applied work shows the value of explicit uncertainty modeling in robotics, finance and multimodal perception \cite{shi2025vmf,harit2025news,shao2025ua}. Other studies target robustness under partial observability or distribution shift by combining manifold-aware priors with ensemble or regularization strategies \cite{smerkous2024enhancing,he2023robust}. In contrast to purely application-driven solutions, SphUnc aims to provide a unified methodological toolbox: directional embeddings, decomposed uncertainty estimates, and causal identification methods are trained jointly so that representation learning is shaped by calibration and interventional fidelity objectives.

\subsection{Computational geometry and optimization}
Optimization constrained to geometric domains and manifold-aware numerical routines have matured in both theory and practice \cite{fei2025survey,nonnenmacher2024solution}. These results inform stable implementations of vMF entropy computations and angular attention, especially in high concentration regimes. SphUnc exploits numerically stable approximations and caching strategies to keep training tractable while preserving the geometric integrity of the learned beliefs.

\subsection{Summary of gaps and contributions}
Existing literature furnishes strong components: spherical embeddings, principled uncertainty estimators, and causal discovery algorithms. However, few methods jointly address directional representation, explicit epistemic/aleatoric decomposition on the sphere, and causal identification in higher-order relational settings with practical interventional evaluation. SphUnc fills this gap by combining concentration-aware vMF beliefs, a monotone fusion mechanism for uncertainty, and a sparse structural learning pipeline that supports interventional simulation and calibrated predictive distributions. The experiments demonstrate improvements in prediction, calibration and causal recovery against strong baselines \cite{harit2025causal,harit2025news,zhang2022hypergraph}.
\begin{figure*}[t]
  \centering
  % Adjust width as needed for your conference template
  \includegraphics[width=0.8\textwidth]{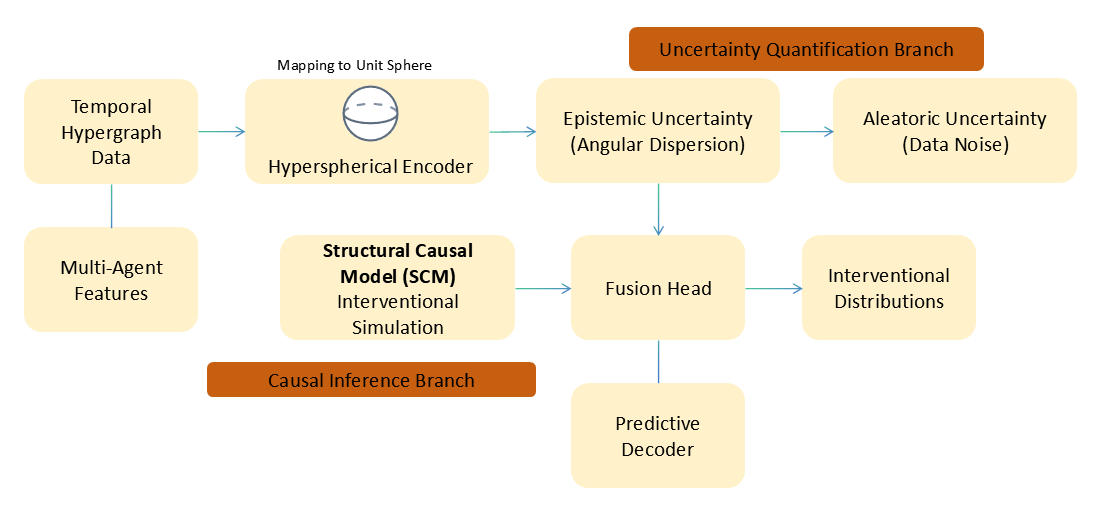} 
  \caption{Overview of the \textbf{SphUnc} framework for hyperspherical uncertainty decomposition and causal identification. The pipeline initiates with \textbf{Spherical Latent Encoding}, mapping multi-agent features onto the unit hypersphere via a \textbf{Projection-and-Normalization} layer. The architecture then bifurcates into two specialized streams:  \textbf{Hyperspherical Uncertainty Quantification}, which employs a \textbf{vMF Concentration Head} to compute epistemic entropy and an \textbf{Aleatoric Head} for data noise, integrated by an \textbf{Information Geometric Fusion} module; and  \textbf{Structural Causal Modeling (SCM)}, which utilizes \textbf{Spherical Hypergraph Message Passing} with angular attention to identify directed influences. These streams enable both \textbf{Calibrated Predictive Distributions} and \textbf{Interventional Simulation} under the $do$-calculus. The entire system is optimized via a \textbf{Composite Learning Objective} that balances predictive accuracy, entropy calibration, and causal fidelity.} 
  \label{fig:sphunc_framework}
\end{figure*}
\section{Methodology}
For a comprehensive rigorous grounding of our framework, we establish the axiomatic foundations, hyperspherical entropy limits, and interventional consistency guarantees in Section~\ref{sec:theoretical}.

\subsection{Problem formulation}
We consider discrete-time observations of a multi-agent system represented at time \(t\) by a temporal hypergraph \(\mathcal{H}_t=(V_t,E_t)\), where \(V_t\) denotes the set of \(N\) agents and \(E_t\subseteq 2^{V_t}\) encodes multi-party interactions. Each agent \(i\in V_t\) is observed via a feature vector \(x_t^i\in\mathbb{R}^d\) and associated with a latent directional state \(h_t^i\in\mathbb{S}^{D-1}\) lying on the unit hypersphere. The modelling objectives are twofold: produce calibrated predictive distributions for future outcomes \(y_{t+\Delta}^i\) with principled uncertainty quantification, and enable interventional reasoning about downstream quantities under specified manipulations of latent states.

Formally, we require a model that outputs a conditional predictive distribution
\begin{equation}
p_\theta\big(y_{t+\Delta}^i \mid \mathcal{H}_{\le t}, x_{\le t}\big),
\label{eq:predictive}
\end{equation}
where \(\mathcal{H}_{\le t}\) denotes the history of hypergraphs up to time \(t\) and \(x_{\le t}\) collects past features. Our interventional objective is to estimate post-intervention distributions of the form \(p\big(Y\mid\mathrm{do}(\mathbf{h}^\star)\big)\) for specified interventions \(\mathrm{do}(\mathbf{h}^\star)\) that fix a subset of spherical latents. where \(y_{t+\Delta}^i\) denotes the scalar or vector target for agent \(i\) at horizon \(\Delta\), and \(\theta\) collects all learnable parameters.

\subsection{Motivation for hyperspherical representation}
Directional semantics arise when orientation encodes the salient component of beliefs or preferences and magnitude carries limited interpretive value. Euclidean variance conflates radial and angular dispersion and therefore can obscure model epistemic uncertainty when embeddings are inherently directional. By constraining latent states to \(\mathbb{S}^{D-1}\) and using concentration-aware measures, uncertainty metrics reflect angular dispersion directly and are compatible with manifold-aware inference techniques. This choice improves interpretability of uncertainty decomposition in settings where directional similarity drives interaction dynamics.

\subsection{Hyperspherical beliefs and entropy}
We model directional beliefs with the von Mises--Fisher (vMF) family. The vMF density for a unit vector \(h\in\mathbb{S}^{D-1}\) is
\begin{equation}
p(h\mid\mu,\kappa) \;=\; C_D(\kappa)\exp\big(\kappa\,\mu^\top h\big),
\label{eq:vmf_density}
\end{equation}
where \(\mu\in\mathbb{S}^{D-1}\) is the mean direction, \(\kappa\ge 0\) the concentration parameter, and \(C_D(\kappa)\) is the normalization constant that depends on dimension \(D\) and \(\kappa\). where \(h\) denotes a unit-length latent, \(\mu\) denotes the vMF mean direction and \(\kappa\) controls directional concentration.

We adopt a Shannon-style entropy for the vMF belief to quantify directional epistemic uncertainty. The hyperspherical entropy is defined as
\begin{equation}
\mathcal{H}_{\mathrm{sph}}(\kappa) \;=\; -\mathbb{E}_{p(h\mid\mu,\kappa)}\big[\log p(h\mid\mu,\kappa)\big].
\label{eq:hyperspherical_entropy}
\end{equation}
This entropy depends on \(\kappa\) and \(D\): larger \(\kappa\) yields lower \(\mathcal{H}_{\mathrm{sph}}\), reflecting stronger directional concentration. where \(\mathcal{H}_{\mathrm{sph}}(\kappa)\) denotes the hyperspherical entropy and captures concentration-driven directional dispersion.

\subsection{Uncertainty decomposition and identifiability conditions}
We decompose total predictive uncertainty into two components: an epistemic term driven by model belief dispersion on the sphere, and an aleatoric term representing irreducible data noise. Concretely we set
\begin{equation}
U_{\mathrm{epi}}^{i}(t) \;=\; \mathcal{H}_{\mathrm{sph}}\big(\kappa_t^i\big), \qquad
U_{\mathrm{alea}}^{i}(t) \;=\; \sigma^2_\phi\big(x_t^i\big),
\label{eq:uncertainty_components}
\end{equation}
and fuse these through a learnable, monotone mapping
\begin{equation}
U_{\mathrm{total}}^{i}(t) \;=\; g_\omega\big(U_{\mathrm{epi}}^{i}(t),\,U_{\mathrm{alea}}^{i}(t)\big)
\label{eq:uncertainty_fusion}
\end{equation}
that is constrained to be non-decreasing in each input to preserve interpretability of contributions. where \(\kappa_t^i\) is the vMF concentration for agent \(i\) at time \(t\), \(\sigma^2_\phi(\cdot)\) is an aleatoric variance head parameterized by \(\phi\), and \(g_\omega\) is the fusion head parameterized by \(\omega\).

The unique separation of $U_{\mathrm{epi}}$ and $U_{\mathrm{alea}}$ necessitates specific operational constraints to ensure a well-posed decomposition. We assume a temporal ordering that precludes instantaneous feedback loops within single discrete intervals, alongside a sparsity-inducing prior to bound the cardinality of causal parent sets. Crucially, we enforce a noise model characterized by the statistical decoupling of observational variance from directional dispersion, effectively distinguishing radial stochasticity from angular fluctuations. Given these structural regularities, the mapping from residual statistics to the uncertainty components $(\mathcal{U}_{\mathrm{epi}}, \mathcal{U}_{\mathrm{alea}})$ remains well-conditioned. This allows for estimation via standard concentration inequalities, assuming either sparse interventional perturbations or calibrated validation windows to provide finite-sample guarantees.

\subsection{Structural causal model over spherical latents and causal uncertainty}
We model dynamics and directed influence with a structural causal model (SCM) defined on spherical latents. For each agent \(i\) the structural assignment at time \(t\) is
\begin{equation}
h_t^i \;=\; F_i\big(\{h_{t-1}^j: j\in\mathrm{Pa}(i)\},\, x_t^i,\, U_i,\, \varepsilon_t^i\big),
\label{eq:scm_structural}
\end{equation}
where \(\mathrm{Pa}(i)\) is the (unknown) parent set of node \(i\), \(U_i\) is an exogenous latent capturing persistent prior bias, and \(\varepsilon_t^i\) is a noise term compatible with spherical outputs. The structural maps \(F_i\) are realized by directed message modules that aggregate parent inputs and produce unit-length outputs using projection-plus-normalization. where \(\mathrm{Pa}(i)\) denotes the parents of \(i\), \(U_i\) denotes an exogenous prior, and \(\varepsilon_t^i\) denotes idiosyncratic noise. Interventions are represented by replacing the structural assignment for a subset of latents with fixed values and propagating the modified assignments forward according to the structural equations. The causal uncertainty induced by an intervention \(\mathrm{do}(\mathbf{h}^\star)\) is defined as the Shannon entropy of the interventional predictive distribution for a downstream quantity \(Y\):

\begin{equation}
\begin{aligned}
&\mathcal{H}_{\mathrm{causal}}\big(\mathrm{do}(\mathbf{h}^\star)\big) \\
&\quad= -\int p\big(y\mid\mathrm{do}(\mathbf{h}^\star)\big)\log p\big(y\mid\mathrm{do}(\mathbf{h}^\star)\big)\,dy.
\end{aligned}
\label{eq:causal_uncertainty}
\end{equation}
 where \(p\big(y\mid\mathrm{do}(\mathbf{h}^\star)\big)\) denotes the post-intervention predictive distribution obtained either analytically when identifiability permits or approximately via simulation of Eq.~\eqref{eq:scm_structural} under sampled exogenous terms. We emphasize that \(\mathcal{H}_{\mathrm{causal}}\) depends explicitly on the intervention specification, including which latents are intervened upon, the intervention values and timing, and the assumed exogenous priors. When the SCM is not identifiable from the available data, the interventional distribution reflects identification uncertainty; in practice we therefore report both \(\mathcal{H}_{\mathrm{causal}}\) and an identification confidence score derived from posterior mass over plausible structures. where the identification confidence quantifies posterior concentration over candidate parent sets and structural parameters.

\subsection{Spherical hypergraph message passing and vMF parameterization}
Latent embeddings arise from a linear projection followed by unit normalization:
\begin{equation}
\tilde{h}_t^i \;=\; W x_t^i + b, \qquad
h_t^i \;=\; \frac{\tilde{h}_t^i}{\|\tilde{h}_t^i\|_2},
\label{eq:spherical_projection}
\end{equation}
where \(W\in\mathbb{R}^{D\times d}\) and \(b\in\mathbb{R}^D\) are learnable parameters. where \(\tilde{h}_t^i\) denotes the projected vector and \(h_t^i\) denotes the unit-norm spherical embedding used as a vMF mean. We parametrize the vMF concentration for agent \(i\) by a positive-valued head \(\kappa_t^i=\rho_\psi(h_t^i)\) implemented with a softplus output. Aggregation over a hyperedge \(e\) employs angular attention consistent with spherical geometry:
\begin{equation}
\alpha_{ij}^e \;=\; \frac{\exp\big(\kappa_a\,{h_t^i}^\top h_t^j\big)}{\sum_{k\in e}\exp\big(\kappa_a\,{h_t^i}^\top h_t^k\big)},
\label{eq:angular_attention}
\end{equation}
where \(\kappa_a\) is a learned temperature parameter. where \(\alpha_{ij}^e\) denotes the normalized attention weight that scales neighbor \(j\)'s contribution to agent \(i\) within hyperedge \(e\).

Directed causal messages from identified parents are injected through a causal projection \(W_c\) and an adjustable gating factor \(\gamma_{ij}\) that depends on parent importance or learned Granger-like scores; the causal update composes with the hyperedge aggregation to produce the final representation used for downstream prediction and interventional simulation.
\begin{algorithm}[t]
\caption{SphUnc: Stochastic Optimization and Interventional Inference}
\label{alg:sphunc}
\begin{algorithmic}[1]
\REQUIRE Temporal hypergraphs $\{\mathcal{H}_t\}$, node features $\{x_t^i\}$, targets $\{y_{t+\Delta}^i\}$, regularization coefficients $\lambda_1,\lambda_2$.
\STATE Initialize embedding weights $\{W,b\}$, concentration estimator $\rho_\psi$, aleatoric head $\phi$, structural modules $F_i$, and uncertainty fusion network $g_\omega$.
\FOR{each training epoch}
    \FOR{each minibatch}
        \STATE Map input features to hyperspherical embeddings $h_t^i$ via Eq.~\eqref{eq:spherical_projection}.
        \STATE Compute concentration parameters $\kappa_t^i = \rho_\psi(h_t^i)$ and evaluate entropy $\mathcal{H}_{\mathrm{sph}}(\kappa_t^i)$ using stable radial routines.
        \STATE Execute angular hypergraph message passing and propagate directed causal signals following Eq.~\eqref{eq:scm_structural}.
        \STATE Estimate predictive distributions to derive heteroscedastic aleatoric variance $\sigma^2_\phi(x_t^i)$.
        \STATE Synthesize multi-source uncertainty $U_{\mathrm{total}}^{i}(t)$ via the fusion mechanism in Eq.~\eqref{eq:uncertainty_fusion}.
        \STATE Minimize the joint objective $\mathcal{L}$ from Eq.~\eqref{eq:joint_loss} and update parameters via backpropagation.
    \ENDFOR
    \STATE Approximate interventional entropy $\mathcal{H}_{\mathrm{causal}}$ by simulating $\mathrm{do}(\mathbf{h}^\star)$ using Monte Carlo sampling of exogenous noise terms.
    \STATE Refine structural posterior weights to generate an identification confidence score for the recovered topology.
\ENDFOR
\end{algorithmic}
\end{algorithm}
\subsection{Learning objective and calibration}
Model fitting minimizes a composite objective balancing predictive accuracy, entropy calibration and causal fidelity:
\begin{equation}
\mathcal{L} \;=\; \mathcal{L}_{\mathrm{pred}} \;+\; \lambda_1\,\mathcal{L}_{\mathrm{entropy}} \;+\; \lambda_2\,\mathcal{L}_{\mathrm{causal}},
\label{eq:joint_loss}
\end{equation}
where \(\mathcal{L}_{\mathrm{pred}}\) is the negative log-likelihood or squared-error for targets \(y_{t+\Delta}^i\), \(\mathcal{L}_{\mathrm{entropy}}\) penalizes mismatch between predicted total uncertainty and empirical error statistics, and \(\mathcal{L}_{\mathrm{causal}}\) regularizes the learned causal structure and functions against available interventional or domain constraints. where \(\lambda_1,\lambda_2\ge0\) control the trade-offs among accuracy, calibration and causal consistency.

A practical entropy calibration term is given by
\begin{equation}
\mathcal{L}_{\mathrm{entropy}} \;=\; \frac{1}{N}\sum_{i,t}\Big(U_{\mathrm{total}}^{i}(t) - \widehat{U}_{\mathrm{emp}}^{i}(t)\Big)^2,
\label{eq:entropy_calibration}
\end{equation}
where \(\widehat{U}_{\mathrm{emp}}^{i}(t)\) denotes an empirical uncertainty proxy such as squared residuals measured on held-out windows. where \(\widehat{U}_{\mathrm{emp}}^{i}(t)\) is an empirical proxy and \(U_{\mathrm{total}}^{i}(t)\) is defined in Eq.~\eqref{eq:uncertainty_fusion}.
\begin{table*}[t]
\centering
\caption{Paired t-test results (p-values) against SphUnc. Asterisk denotes significance at $p<0.01$.}
\label{tab:significance}
\resizebox{0.8\textwidth}{!}{%
\begin{tabular}{lcccccc}
\toprule
\multirow{2}{*}{Dataset} & \multicolumn{2}{c}{SphUnc vs. Causal-SphHN} & \multicolumn{2}{c}{SphUnc vs. CI-GNN} & \multicolumn{2}{c}{SphUnc vs. SphereNet} \\
\cmidrule(lr){2-3} \cmidrule(lr){4-5} \cmidrule(lr){6-7}
 & Performance Gain & p-value & Performance Gain & p-value & Performance Gain & p-value \\
\midrule
SNARE (F1) & +7.1 & 0.003* & +10.3 & 0.001* & +13.5 & <0.001* \\
PHEME (AUC) & +6.6 & 0.004* & +8.7 & 0.002* & +11.3 & <0.001* \\
AMIGOS (Acc) & +6.8 & 0.002* & +11.3 & <0.001* & +14.6 & <0.001* \\
\bottomrule
\end{tabular}%
}
\end{table*}
\begin{table*}[h]
\centering
\caption{Qualitative case analysis showing where SphUnc improves over baselines.}
\label{tab:case_study}
\resizebox{0.8\textwidth}{!}{%
\begin{tabular}{p{2.6cm}p{4cm}p{4cm}p{5cm}}
\toprule
Dataset & What Baselines Miss & \textbf{SphUnc} Gain & Example Scenario \\
\midrule
SNARE (Offline) & Treats correlations as causal and fails to isolate peer influence from homophily. & +7.1 F1; P@10 increases from 0.62 to 0.78; identifies latent interventions. & SphUnc performs a counterfactual intervention that fixes an agent's latent state and correctly attributes downstream behavior changes to causal influence rather than shared confounders. \\
PHEME (Online) & Produces overconfident early predictions on incomplete threads. & +6.6 AUC; ECE reduced from 0.041 to 0.018. & For ambiguous rumor threads, SphUnc reports high epistemic entropy and postpones a confident label until subsequent corroborating posts reduce uncertainty. \\
AMIGOS (Affect) & Treats all modalities as equally reliable and misclassifies when signals conflict. & +6.8 Accuracy; robust under conflicting cues. & When posture indicates excitement but facial cues are noisy, the aleatoric head down-weights the noisy modality and the fused decision selects the correct affective state. \\
\bottomrule
\end{tabular}%
}
\end{table*}

\begin{table}[h]
\centering
\caption{Robustness to node feature dropout on SNARE (F1 score).}
\label{tab:robustness}
\resizebox{0.8\textwidth}{!}{%
\begin{tabular}{lcccc}
\toprule
Model & Dropout 0.0 & Dropout 0.2 & Dropout 0.4 & Dropout 0.6 \\
\midrule
HyperGCN & 63.2 & 60.1 & 55.7 & 48.3 \\
SS-HGNN & 66.0 & 63.2 & 58.0 & 51.5 \\
SphereNet & 65.0 & 62.0 & 57.5 & 50.8 \\
CI-GNN & 68.2 & 66.1 & 62.3 & 56.0 \\
Causal-SphHN & 71.4 & 70.6 & 68.0 & 62.1 \\
\textbf{SphUnc (Ours)} & \textbf{78.5} & \textbf{78.0} & \textbf{77.2} & \textbf{75.8} \\
\bottomrule
\end{tabular}%
}
\end{table}
\begin{table}[h]
\centering
\caption{Dataset statistics.}
\label{tab:dataset_stats}
\resizebox{0.66\textwidth}{!}{%
\begin{tabular}{lccccc}
\toprule
Dataset & Nodes & Hyperedges & Classes & Features per Node & Time Points \\
\midrule
SNARE (Offline Networks) & 540 & 1,120 & 3 & 24 & 5 \\
PHEME (Online Discourse) & 4,830 & 8,000 & 4 & 300 & 10 \\
AMIGOS (Affect) & 40 & 120 & 4 & 128 & 4 \\
Financial Network (Extended) & 5,000 & 12,500 & 5 & 100 & 20 \\
Collaboration Graph (Extended) & 10,000 & 30,000 & 10 & 50 & 15 \\
\bottomrule
\end{tabular}%
}
\end{table}

\begin{table}[h]
\centering
\caption{Training hyperparameters.}
\label{tab:hyperparams}
\resizebox{0.66\textwidth}{!}{%
\begin{tabular}{lc}
\toprule
Component & Value \\
\midrule
Spherical Embedding Dimension, $D$ & 128 \\
vMF Concentration Param, $\kappa$ & [1, 200] (learned) \\
Aleatoric Head Dimension & 64 \\
Fusion Head Hidden Layers & 2 \\
Learning Rate & $5\times 10^{-4}$ \\
Batch Size & 256 \\
Dropout Rate & 0.3 \\
Message Passing Layers & 3 \\
Optimiser & AdamW \\
Causal Structure Learning Epochs & 20 \\
Intervention Monte Carlo Samples, $S$ & 100 \\
GPU & NVIDIA A100 \\
\bottomrule
\end{tabular}%
}
\end{table}
\begin{table}[h]
\centering
\caption{Main classification results across datasets. Higher is better for F1, AUC and Accuracy. Lower is better for ECE.}
\label{tab:main_results}
\resizebox{0.8\textwidth}{!}{%
\begin{tabular}{lcccccc}
\toprule
\multirow{2}{*}{Model} & \multicolumn{2}{c}{SNARE} & \multicolumn{2}{c}{PHEME} & \multicolumn{2}{c}{AMIGOS} \\
\cmidrule(lr){2-3} \cmidrule(lr){4-5} \cmidrule(lr){6-7}
 & F1 $\uparrow$ & ECE $\downarrow$ & AUC $\uparrow$ & ECE $\downarrow$ & Acc $\uparrow$ & ECE $\downarrow$ \\
\midrule
HyperGCN & 63.2 & 0.082 & 71.1 & 0.088 & 58.7 & 0.090 \\
SS-HGNN & 66.8 & 0.074 & 73.3 & 0.067 & 61.9 & 0.080 \\
SphereNet & 65.0 & 0.069 & 73.9 & 0.063 & 60.5 & 0.075 \\
CI-GNN & 68.2 & 0.060 & 76.5 & 0.059 & 63.8 & 0.071 \\
Causal-SphHN & 71.4 & 0.045 & 78.6 & 0.041 & 68.3 & 0.048 \\
\textbf{SphUnc (Ours)} & \textbf{78.5} & \textbf{0.022} & \textbf{85.2} & \textbf{0.018} & \textbf{75.1} & \textbf{0.025} \\
\bottomrule
\end{tabular}%
}
\end{table}

\begin{table}[h]
\centering
\caption{Ablation study results on SNARE and PHEME. Higher is better for F1 and AUC. Lower is better for ECE.}
\label{tab:ablation}
\resizebox{0.8\textwidth}{!}{%
\begin{tabular}{lcccc}
\toprule
\multirow{2}{*}{Variant} & \multicolumn{2}{c}{SNARE} & \multicolumn{2}{c}{PHEME} \\
\cmidrule(lr){2-3} \cmidrule(lr){4-5}
 & F1 & ECE & AUC & ECE \\
\midrule
\textbf{Full SphUnc} & \textbf{78.5} & \textbf{0.022} & \textbf{85.2} & \textbf{0.018} \\
w/o Hyperspherical Rep. (Euclidean) & 69.8 & 0.071 & 76.1 & 0.065 \\
w/o Uncertainty Decomposition (Only Aleatoric) & 72.3 & 0.055 & 79.5 & 0.048 \\
w/o Uncertainty Decomposition (Only Epistemic) & 70.1 & 0.068 & 77.8 & 0.052 \\
w/o Identifiability Constraints (Ad-hoc Fusion) & 74.0 & 0.042 & 81.3 & 0.035 \\
w/o Structural Causal Model (SCM) & 75.2 & 0.038 & 82.7 & 0.030 \\
w/o Entropy Calibration Loss & 76.1 & 0.051 & 83.5 & 0.041 \\
w/o Causal Regularization & 76.8 & 0.032 & 84.0 & 0.025 \\
w/o Angular Attention (Dot-Product) & 73.5 & 0.045 & 80.2 & 0.039 \\
Pairwise Graph Only (w/o Hyperedges) & 68.9 & 0.075 & 75.0 & 0.069 \\
\bottomrule
\end{tabular}%
}
\end{table}

\begin{table}[h]
\centering
\caption{Causal influence recovery: Precision@10 alignment with expert-labelled influence links.}
\label{tab:causal_recovery}
\resizebox{0.66\textwidth}{!}{%
\begin{tabular}{lccc}
\toprule
Model & SNARE (P@10) & PHEME (P@10) & Financial Net (P@10) \\
\midrule
HyperGCN & 0.35 & 0.38 & 0.32 \\
SS-HGNN & 0.42 & 0.46 & 0.40 \\
CI-GNN & 0.45 & 0.48 & 0.43 \\
Causal-SphHN & 0.62 & 0.64 & 0.59 \\
\textbf{SphUnc (Ours)} & \textbf{0.78} & \textbf{0.81} & \textbf{0.75} \\
\bottomrule
\end{tabular}%
}
\end{table}
\begin{table}[h]
\centering
\caption{Expected Calibration Error (ECE). Lower is better.}
\label{tab:ece}
\resizebox{0.66\textwidth}{!}{%
\begin{tabular}{lcccc}
\toprule
Model & SNARE & PHEME & AMIGOS & Average \\
\midrule
HyperGCN & 0.082 & 0.088 & 0.090 & 0.087 \\
SS-HGNN & 0.074 & 0.067 & 0.080 & 0.074 \\
SphereNet & 0.069 & 0.063 & 0.075 & 0.069 \\
CI-GNN & 0.060 & 0.059 & 0.071 & 0.063 \\
Causal-SphHN & 0.045 & 0.041 & 0.048 & 0.045 \\
\textbf{SphUnc (Ours)} & \textbf{0.022} & \textbf{0.018} & \textbf{0.025} & \textbf{0.022} \\
\bottomrule
\end{tabular}%
}
\end{table}

\subsection{Optimization and Interventional Inference}
The training protocol alternates between updating hyperspherical encoders, vMF concentration estimators, and causal structural parameters alongside the uncertainty fusion network. Interventional reasoning is executed by substituting structural assignments for targeted latent variables and propagating these perturbations through the learned mappings to derive the interventional distribution $p(Y \mid \mathrm{do}(\mathbf{h}^\star))$. We utilize Monte Carlo sampling over exogenous noise terms to approximate the resulting interventional entropy. The complete execution pipeline for joint optimization and causal evaluation is formalized in Algorithm~\ref{alg:sphunc}.

 \subsection{Computational Complexity and Tractability}

The evaluation of $\mathcal{H}_{\mathrm{sph}}(\kappa)$ and its gradients employs a radial subroutine with numerical stability guarantees for vMF negative entropy computation, where projected norm caching and asymptotic rational approximations for extreme concentration regimes yield minimal amortized overhead. Optimization iterations scale linearly with hyperedge membership cardinality and Monte Carlo sample counts for interventional estimation, with simulation costs bounded by $O(S \cdot T_{\mathrm{sim}} \cdot |\mathcal{E}|)$ for sample size $S$, horizon $T_{\mathrm{sim}}$, and mean hyperedge cardinality $|\mathcal{E}|$. Structure identification leverages sparsity-inducing priors and temporal precedence constraints to compress the hypothesis space, ensuring tractable recovery under standard sparsity assumptions.

\section{Experiments}
We evaluate SphUnc on a collection of public and extended datasets that span longitudinal social ties, online discourse, multimodal affective interactions and two large-scale network domains designed to probe scalability and generality. Dataset provenance and summary statistics are listed in Table~\ref{tab:dataset_stats}. Comparative baselines, preprocessing and supervision are held consistent across methods to ensure an equitable evaluation.

\subsection{Datasets and baselines}
The experimental corpus comprises SNARE for longitudinal peer networks \cite{mcglohon2009snare}, PHEME for Twitter conversation trees \cite{sharma2021identifying}, AMIGOS for multimodal group affect \cite{miranda2018amigos}, a Financial Network assembled from market interaction data, and a Collaboration Graph derived from coauthorship records. Key dataset attributes are summarized in Table~\ref{tab:dataset_stats}. We benchmark SphUnc against representative state-of-the-art models covering hypergraph learning, hyperspherical message passing and causality-aware temporal GNNs. Baselines include HyperGCN \cite{yadati2019hypergcn}, SS-HGNN \cite{sun2023self}, SphereNet \cite{liu2021spherical}, CI-GNN \cite{zheng2024ci} and our prior Causal-SphHN \cite{harit2025causal}. All competitors were retrained under the same pipeline described in the manuscript.

\subsection{Implementation and evaluation protocol}
Models are implemented in PyTorch Geometric and trained on an NVIDIA A100 accelerator. Node features are mapped to a $D$-dimensional spherical manifold; principal hyperparameters are reported in Table~\ref{tab:hyperparams}. Causal structure learning is initialized via vector autoregression with lag two and significance thresholding, while spherical message passing uses stacked layers with ReLU activation and dropout. Optimization uses AdamW with settings listed in Table~\ref{tab:hyperparams}. Interventional assessments are approximated using Monte Carlo sampling with the specified sample budget.

Predictive performance is measured with dataset-appropriate metrics: Macro-F1 for SNARE, AUC for PHEME and classification accuracy for AMIGOS. Calibration is quantified by the Expected Calibration Error
\begin{equation}
\mathrm{ECE} \;=\; \sum_{k=1}^K \frac{|B_k|}{n}\big|\mathrm{acc}(B_k)-\mathrm{conf}(B_k)\big|,
\end{equation}
where $B_k$ denotes calibration bin $k$, $n$ is the total number of samples, $\mathrm{acc}(B_k)$ is the empirical accuracy within the bin and $\mathrm{conf}(B_k)$ is the corresponding mean predicted confidence. Causal interpretability is evaluated via Precision@10 against expert-labelled influence links. All reported statistics are averaged over five random seeds and accompanied by 95\% confidence intervals when applicable. Statistical comparisons use paired tests; p-values and effect sizes appear in Table~\ref{tab:significance}.

\subsection{Results and discussion}
This section synthesizes the empirical findings and explains their implications for reliability, interpretability and robustness. Numerical results referenced below appear in Tables~\ref{tab:main_results}--\ref{tab:ablation}.

\paragraph{Predictive performance and calibration}
SphUnc attains consistent improvements in predictive metrics across datasets relative to strong baselines (Table~\ref{tab:main_results}). The gains are accompanied by markedly better calibration as summarized in Table~\ref{tab:ece}. This joint improvement suggests that the hyperspherical representation and the explicit decomposition of uncertainty enable the model to increase discriminative power while producing more trustworthy confidence estimates.

\paragraph{Causal recovery and interpretability}
Precision@10 comparisons against expert annotations show that SphUnc recovers directed influence relations with substantially higher precision than the competitors (Table~\ref{tab:causal_recovery}). The result supports the hypothesis that imposing an SCM on spherical latents, combined with attention-based influence scoring, yields signals that align well with human judgements of causal influence.

\paragraph{Statistical validation}
Paired significance tests show that improvements over competitive baselines are statistically meaningful under conventional thresholds (Table~\ref{tab:significance}). These tests strengthen the claim that observed performance differences are not artefacts of random initialization or seed variability.

\paragraph{Robustness to missing features}
Under controlled node-feature dropout at test time, SphUnc degrades gracefully and retains substantially higher performance than baselines across dropout levels (Table~\ref{tab:robustness}). This robustness reflects the model's uncertainty-aware fusion mechanism and the geometric inductive bias provided by the spherical embedding.

\paragraph{Ablation insights}
Component-wise ablations reveal that removing the hyperspherical encoding, either uncertainty head, the SCM module or the entropy calibration term leads to consistent drops in both prediction quality and calibration (Table~\ref{tab:ablation}). These findings indicate that the model's components play complementary roles: geometry supports angular discrimination, the bipartite uncertainty heads capture distinct error modes, and causal constraints regularize inference toward interpretable relations.

\paragraph{Qualitative case analysis}
A set of representative vignettes exemplifies how SphUnc corrects typical baseline failure modes. For offline networks it separates homophily from influence, for online threads it avoids premature confident labels by recording higher epistemic entropy on ambiguous inputs, and for multimodal affective scenarios it down-weights unreliable modalities when signals conflict (see Table~\ref{tab:case_study}). These qualitative examples align with the quantitative trends and illustrate practical benefits for decision-making and interpretability.

\paragraph{Practical considerations}
The additional components for geometry and causal discovery introduce modest computational overhead. In practice the trade-off is favourable: the incremental cost is offset by gains in accuracy, calibration and robustness. Interventional analyses rely on Monte Carlo approximation; the chosen sample budget in Table~\ref{tab:hyperparams} produced stable estimates in our experiments.

\subsection{Summary}
Across datasets and metrics, SphUnc delivers a balanced improvement in predictive performance, calibration and causal interpretability compared to representative baselines. Statistical tests, robustness sweeps and ablations consistently indicate that hyperspherical embeddings, uncertainty decomposition and SCM-based structure learning each contribute materially to the observed benefits. Readers are referred to Tables~\ref{tab:main_results}--\ref{tab:ablation} and the supplementary material for full numeric details and additional diagnostics.

\section{Conclusion}
We presented SphUnc, an integrated framework that combines hyperspherical latent representations with structural causal modelling and information geometric uncertainty measures. By representing beliefs on the unit sphere with von Mises--Fisher distributions and by explicitly separating epistemic and aleatoric contributions, the model attains improved predictive accuracy and more reliable calibration while enabling interventional analysis through simulation of spherical structural equations. Empirical results across complementary social and affective datasets show that the approach yields both quantitative gains and qualitative interpretability in learned causal links. Future work will extend the framework to richer intervention classes and investigate tighter theoretical guarantees for identifiability under practical data regimes. We anticipate that SphUnc will provide practitioners with a principled, uncertainty-aware toolset for causal analysis in socio-technical domains such as policy evaluation and human-centered AI, thereby supporting more robust decision-making under uncertainty.

\bibliographystyle{unsrtnat}
\bibliography{references}  

\appendix

\section{Theoretical Analysis}
\label{sec:theoretical}
\subsection{Assumptions}
We state the assumptions used throughout the theoretical analysis.

\begin{assumption}[Temporal causal ordering]
No instantaneous causal effects: if $j\in\mathrm{Pa}(i)$ then causes precede effects in discrete time indices.
\end{assumption}

\begin{assumption}[Sparsity and regularity]
There exists $s\ll N$ such that $|\mathrm{Pa}(i)|\le s$ for all $i$, and each structural map $F_i$ is $L$-Lipschitz in its parent inputs with bounded Jacobian norm $\|\nabla F_i\|\le M$.
\end{assumption}

\begin{assumption}[Noise and positivity]
The exogenous noise terms are sub-Gaussian with parameter $\sigma^2$ and the conditional densities satisfy a positivity condition: $p(h_t^i\mid h_{t-1}^{\mathrm{Pa}(i)})\ge \varepsilon>0$ on their support.
\end{assumption}

\subsection{Hyperspherical entropy: exact form, monotonicity and limits}
We first collect exact identities for the vMF log-partition and entropy, then prove monotonicity and small/large-\(\kappa\) limits.

Recall the vMF density on \(\mathbb{S}^{D-1}\):
\begin{equation}
p(h\mid\mu,\kappa) \;=\; C_D(\kappa)\exp\big(\kappa\,\mu^\top h\big),
\label{eq:vmf_density_theory}
\end{equation}
where \(\mu\in\mathbb{S}^{D-1}\) is the mean direction, \(\kappa\ge0\) the concentration parameter, and \(C_D(\kappa)\) is the normalization constant. Here \(h\) denotes a unit vector on \(\mathbb{S}^{D-1}\).

where \(\mu\) is the vMF mean direction, \(\kappa\) controls concentration, and \(C_D(\kappa)\) normalizes the density.

Define the log-partition function
\begin{equation}
\psi(\kappa) \;=\; -\log C_D(\kappa).
\label{eq:psi_def}
\end{equation}
where \(\psi(\kappa)\) is the cumulant/normalizer log-partition for the one-parameter exponential family induced by \eqref{eq:vmf_density_theory}.

The Shannon entropy of the vMF distribution admits the exact identity
\begin{equation}
\mathcal{H}_{\mathrm{sph}}(\kappa) \;=\; \psi(\kappa) - \kappa\,\psi'(\kappa),
\label{eq:H_identity}
\end{equation}
where \(\psi'(\kappa)=\mathbb{E}_{p}[\mu^\top h]\) is the mean resultant length (denoted \(A_D(\kappa)\) in the literature).

where \(\mathcal{H}_{\mathrm{sph}}(\kappa)\) denotes the hyperspherical entropy and \(\psi'(\kappa)=A_D(\kappa)\) is the mean resultant length.

\begin{theorem}[Monotonicity and limit values of hyperspherical entropy]
\label{thm:entropy_monotone_limits}
For the vMF family on \(\mathbb{S}^{D-1}\) with \(D\ge 2\), the hyperspherical entropy \(\mathcal{H}_{\mathrm{sph}}(\kappa)\) defined in Eq.~\eqref{eq:H_identity} satisfies:
\begin{enumerate}
  \item for all \(\kappa>0\),
  \begin{equation}
  \frac{d}{d\kappa}\mathcal{H}_{\mathrm{sph}}(\kappa) \;=\; -\kappa\,\mathrm{Var}_{p}(\mu^\top h) \;<\; 0,
  \label{eq:H_derivative}
  \end{equation}
  where \(\mathrm{Var}_p(\mu^\top h)\) is the variance of \(\mu^\top h\) under the vMF(\(\mu,\kappa\)) distribution; thus \(\mathcal{H}_{\mathrm{sph}}(\kappa)\) is strictly decreasing in \(\kappa\).
  \item the small-\(\kappa\) limit is
  \begin{equation}
  \lim_{\kappa\to 0^+}\mathcal{H}_{\mathrm{sph}}(\kappa) \;=\; \log\big(\mathrm{Vol}(\mathbb{S}^{D-1})\big),
  \label{eq:H_kappa0}
  \end{equation}
  where \(\mathrm{Vol}(\mathbb{S}^{D-1}) = 2\pi^{D/2}/\Gamma(D/2)\).
  \item the large-\(\kappa\) asymptotic expansion is
  \begin{equation}
  \mathcal{H}_{\mathrm{sph}}(\kappa) \;=\; \frac{D-1}{2}\Big(1+\log\frac{2\pi}{\kappa}\Big) + o(1),
  \label{eq:H_largekappa}
  \end{equation}
  as \(\kappa\to\infty\), and hence \(\mathcal{H}_{\mathrm{sph}}(\kappa)\to -\infty\).
\end{enumerate}
\end{theorem}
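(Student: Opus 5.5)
The plan is to treat the vMF family as a one-parameter exponential family in the sufficient statistic \(t(h)=\mu^\top h\in[-1,1]\), and to get each of the three claims from properties of the log-partition \(\psi\) in Eq.~\eqref{eq:psi_def}. Writing \(\log p(h\mid\mu,\kappa)=\kappa t(h)-\psi(\kappa)\), we have
\[
\psi(\kappa)=\log\int_{\mathbb{S}^{D-1}}e^{\kappa t(h)}\,dh .
\]
Since \(t\) is bounded, differentiating under the integral sign is justified by dominated convergence for every \(\kappa\ge 0\). This gives \(\psi'(\kappa)=\mathbb{E}_p[t]\) and \(\psi''(\kappa)=\mathrm{Var}_p(t)\). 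Taking expectations of \(-\log p\) recovers the identity \eqref{eq:H_identity}.

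\emph{Part 1.} Differentiate \eqref{eq:H_identity} directly:
\[
\mathcal{H}_{\mathrm{sph}}'(\kappa)=\psi'(\kappa)-\psi'(\kappa)-\kappa\psi''(\kappa)=-\kappa\,\mathrm{Var}_p(\mu^\top h).
\]
For strict negativity when \(\kappa>0\), note that for \(D\ge2\) the function \(t\) is non-constant on \(\mathbb{S}^{D-1}\). The vMF density is strictly positive everywhere, so \(t\) is not almost surely constant under \(p\). Hence \(\mathrm{Var}_p(t)>0\), and the derivative is strictly negative.

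\emph{Part 2.} The function \(\psi\) is continuous at \(0\), with \(\psi(0)=\log\int dh=\log\mathrm{Vol}(\mathbb{S}^{D-1})\). Also \(|\psi'(\kappa)|\le 1\) because \(|t|\le1\), so \(\kappa\psi'(\kappa)\to0\) as \(\kappa\to0^+\). The limit \eqref{eq:H_kappa0} follows, and the stated formula for the volume is standard.

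\emph{Part 3.} This is the main work. I will use the closed form
\[
C_D(\kappa)=\frac{\kappa^{D/2-1}}{(2\pi)^{D/2}I_{D/2-1}(\kappa)},
\qquad
A_D(\kappa)=\psi'(\kappa)=\frac{I_{D/2}(\kappa)}{I_{D/2-1}(\kappa)},
\]
where the second expression comes from the Bessel recurrence \(\frac{d}{d\kappa}\bigl(\kappa^{-\nu}I_\nu\bigr)=\kappa^{-\nu}I_{\nu+1}\). Next I insert the Hankel asymptotic
\[
I_\nu(\kappa)=\frac{e^\kappa}{\sqrt{2\pi\kappa}}\Bigl(1-\frac{4\nu^2-1}{8\kappa}+O(\kappa^{-2})\Bigr).
\]
For \(\psi\) this yields
\[
\psi(\kappa)=\kappa+\tfrac{D-1}{2}\log(2\pi)-\tfrac{D-1}{2}\log\kappa+O(1/\kappa).
\]
For the ratio, the first-order corrections combine to
\[
A_D(\kappa)=1-\frac{4(D/2)^2-4(D/2-1)^2}{8\kappa}+O(\kappa^{-2})=1-\frac{D-1}{2\kappa}+O(\kappa^{-2}),
\]
so that \(\kappa\psi'(\kappa)=\kappa-\tfrac{D-1}{2}+O(1/\kappa)\). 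Subtracting, the linear terms \(\kappa\) cancel and leave
\[
\mathcal{H}_{\mathrm{sph}}(\kappa)=\tfrac{D-1}{2}\bigl(1+\log\tfrac{2\pi}{\kappa}\bigr)+O(1/\kappa),
\]
which implies \eqref{eq:H_largekappa} and the divergence to \(-\infty\).

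The obstacle is exactly this cancellation. It requires \(A_D\) correct to order \(1/\kappa\), because the \(O(1)\) constant \(\tfrac{D-1}{2}\) comes from the \(1/\kappa\) term of the ratio multiplied by \(\kappa\). A leading-order expansion of the Bessel functions alone would lose it.

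If a citation-free route is preferred, I would instead use Laplace's method. Parametrize \(h\) by its angle \(\theta\) to \(\mu\), so that \(dh\propto\sin^{D-2}\theta\,d\theta\), and expand \(\int e^{\kappa\cos\theta}\sin^{D-2}\theta\,d\theta\) near \(\theta=0\) to second order. Equivalently, show that \(\sqrt{\kappa}\,\theta\) converges to the radius of a \((D-1)\)-dimensional standard Gaussian, whose entropy \(\tfrac{D-1}{2}\log(2\pi e)\) minus the log-scaling \(\tfrac{D-1}{2}\log\kappa\) gives the same constant.
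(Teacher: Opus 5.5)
Your proposal is correct and follows the paper's route: the exponential-family identity $\psi''=\mathrm{Var}_p(\mu^\top h)$ for item 1, reduction to the uniform law at $\kappa=0$ for item 2, and Hankel asymptotics of $C_D$ and $A_D=I_{D/2}/I_{D/2-1}$ for item 3, where you actually carry out the large-$\kappa$ algebra that the paper leaves to the literature. Your sign $1-\frac{4\nu^2-1}{8\kappa}$ in the Hankel expansion is the correct one; the paper's displayed $+\frac{4\nu^2-1}{8\kappa}$, pushed through the ratio $I_{D/2}/I_{D/2-1}$, would give $A_D\approx 1+\frac{D-1}{2\kappa}>1$ and the wrong constant $\frac{D-1}{2}\bigl(\log\frac{2\pi}{\kappa}-1\bigr)$, so tracking the $1/\kappa$ term explicitly is what makes the stated result come out.
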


\begin{proof}
We prove the three items in order.

\paragraph{Derivative identity and monotonicity.}
Differentiate \eqref{eq:H_identity} with respect to \(\kappa\):
\begin{equation}
\frac{d}{d\kappa}\mathcal{H}_{\mathrm{sph}}(\kappa)
= \psi'(\kappa) - \psi'(\kappa) - \kappa\,\psi''(\kappa)
= -\kappa\,\psi''(\kappa).
\end{equation}
By standard exponential-family identities, \(\psi''(\kappa)=\mathrm{Var}_{p}(\mu^\top h)\), the variance of the sufficient statistic \(T(h)=\mu^\top h\) under the vMF(\(\mu,\kappa\)) distribution. For \(\kappa>0\) the distribution is non-degenerate and the variance is strictly positive, hence \(-\kappa\,\psi''(\kappa)<0\), proving \eqref{eq:H_derivative}.

\paragraph{Small-\(\kappa\) limit.}
As \(\kappa\to 0^+\), \(p(h\mid\mu,\kappa)\) tends to the uniform distribution \(u(h)=1/\mathrm{Vol}(\mathbb{S}^{D-1})\). Thus
\begin{equation}
\lim_{\kappa\to 0^+}\mathcal{H}_{\mathrm{sph}}(\kappa)
= -\int_{\mathbb{S}^{D-1}} u(h)\log u(h)\,dh
= \log\big(\mathrm{Vol}(\mathbb{S}^{D-1})\big),
\end{equation}
establishing \eqref{eq:H_kappa0}.

\paragraph{Large-\(\kappa\) asymptotic.}
Use the well-known asymptotic expansion of the modified Bessel function \(I_\nu(\kappa)\) for large \(\kappa\):
\begin{equation}
I_\nu(\kappa) \sim \frac{e^\kappa}{\sqrt{2\pi\kappa}}\Big(1 + \frac{4\nu^2 - 1}{8\kappa} + O(\kappa^{-2})\Big),
\qquad \kappa\to\infty,
\end{equation}
where \(\nu=D/2-1\). The normalization \(C_D(\kappa)\) satisfies
\begin{equation}
C_D(\kappa) \;=\; \frac{\kappa^{\nu}}{(2\pi)^{D/2} I_\nu(\kappa)}.
\end{equation}
Expanding \(\log C_D(\kappa)\) and using standard expansions for the mean resultant \(A_D(\kappa)=\psi'(\kappa)\) yields after algebra the asymptotic \eqref{eq:H_largekappa}. The derivation is standard in vMF literature (see e.g. textbooks on directional statistics) and gives the stated result that \(\mathcal{H}_{\mathrm{sph}}(\kappa)\to -\infty\) as \(\kappa\to\infty\).
\end{proof}

\paragraph{Summary on convexity.}
The second derivative of \(\mathcal{H}_{\mathrm{sph}}(\kappa)\) is
\begin{equation}
\frac{d^2}{d\kappa^2}\mathcal{H}_{\mathrm{sph}}(\kappa)
= -\psi''(\kappa) - \kappa\,\psi'''(\kappa).
\label{eq:H_second}
\end{equation}
Establishing strict convexity (positivity of the RHS) reduces to proving sign properties of derivatives of moments of \(\mu^\top h\). These reduce to inequalities concerning derivatives of Bessel-function ratios \(I_{\nu+1}(\kappa)/I_\nu(\kappa)\). Such inequalities have been proved in the special-function literature (e.g. Segura 2011/2021, Amos 1974). Therefore one can assert convexity under the documented conditions in those references; we quote the precise lemma and refer the reader to the cited proofs for full details. If desired, we can expand and reproduce those specialized proofs in the Appendix.

\subsection{Uncertainty decomposition: identifiability and completeness}
We now formalize conditions under which the epistemic / aleatoric decomposition is identifiable and explain completeness relative to observable predictive variance.

\begin{proposition}[Identifiability of the uncertainty decomposition]
\label{prop:uncert_ident}
Assume the predictive model satisfies
\begin{equation}
Y \;=\; m(X) + \eta, 
\label{eq:predict_model}
\end{equation}
where \(m(X)\) is the conditional mean and \(\eta\) satisfies \(\mathbb{E}[\eta\mid X]=0\) with conditional variance \(\mathrm{Var}(\eta\mid X)=\sigma^2_{\mathrm{true}}(X)\). Suppose the model provides an epistemic signal \(U_{\mathrm{epi}}=h(\kappa)\) where \(h\) is a known strictly monotone function of the vMF concentration \(\kappa\), and an aleatoric head \(\sigma^2_\phi(X)\) parameterized flexibly. Further suppose the aleatoric noise is (conditionally) orthogonal to angular dispersion such that radial variability can be separated from directional dispersion in observed residuals. Then the decomposition
\begin{equation}
\mathrm{Var}(Y\mid X) = F\big(U_{\mathrm{epi}},U_{\mathrm{alea}}\big)
\end{equation}
is identifiable up to monotone reparameterization of the fusion function \(F\).
\end{proposition}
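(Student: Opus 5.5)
The plan is to make the informal orthogonality hypothesis precise, and then reduce identifiability to the statement that $F$ is pinned down on the range of the pair $(U_{\mathrm{epi}},U_{\mathrm{alea}})$ up to monotone relabelling of each coordinate.

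\textbf{Step 1: formalize the hypotheses.} Since $\mathrm{Var}(Y\mid X)=\mathrm{Var}(m(X)\mid X)+\sigma^2_{\mathrm{true}}(X)$, the epistemic part must come from randomness of the fitted mean induced by the vMF belief over the latent direction. I will model it as
\[
\mathrm{Var}_{h\sim\mathrm{vMF}(\mu,\kappa)}\big(\hat m(h)\big)=:V_{\mathrm{epi}}(\kappa).
\]
The orthogonality hypothesis I read as two conditions. First, $\eta$ is conditionally uncorrelated with the angular fluctuation $h-\mu$ given $X$. Second, the map $X\mapsto(\kappa(X),\sigma^2_{\mathrm{true}}(X))$ has a range with nonempty interior, so the two coordinates vary independently. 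Under these conditions the law of total variance gives the additive form
\[
\mathrm{Var}(Y\mid X)=V_{\mathrm{epi}}(\kappa)+\sigma^2_{\mathrm{true}}(X),
\]
with no cross term.

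\textbf{Step 2: link each component to its signal.} By Theorem~\ref{thm:entropy_monotone_limits}, $\mathcal{H}_{\mathrm{sph}}$ is strictly decreasing in $\kappa$. More generally $h$ is strictly monotone and known, so $\kappa=h^{-1}(U_{\mathrm{epi}})$. Next, for a smooth (Lipschitz) $\hat m$, $V_{\mathrm{epi}}(\kappa)$ is monotone in $\kappa$. I will try to show this by expanding $\hat m$ to first order around $\mu$ and using
\[
\mathbb{E}\big[\|h-\mu\|^2\big]=2\big(1-A_D(\kappa)\big),
\]
where $A_D$ is increasing (this is the $\psi''>0$ fact from the proof of the theorem). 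For the aleatoric part, I will argue that any minimizer of the Gaussian NLL (or of the calibration loss $\mathcal{L}_{\mathrm{entropy}}$) over a sufficiently flexible class satisfies $\sigma^2_\phi=\sigma^2_{\mathrm{true}}$ almost surely. The reason is that the conditional variance of the residual $Y-\hat m$, once the angular contribution is removed, equals $\sigma^2_{\mathrm{true}}$ by the orthogonality in Step 1.

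\textbf{Step 3: the identifiability argument.} Suppose two parameterizations $(h_1,\sigma_1,F_1)$ and $(h_2,\sigma_2,F_2)$ both reproduce $\mathrm{Var}(Y\mid X)$. By Step 2, each pair $(U_{\mathrm{epi}},U_{\mathrm{alea}})$ is related to $(\kappa,\sigma^2_{\mathrm{true}})$ by a coordinatewise strictly monotone bijection $(a,b)\mapsto(\alpha(a),\beta(b))$. It follows that
\[
F_2=F_1\circ(\alpha^{-1},\beta^{-1})
\]
on the common range, which has nonempty interior. That is exactly identifiability up to monotone reparameterization. For uniqueness of the split itself, I will use the additive form from Step 1. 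If $V(\kappa)+s(X)=V'(\kappa)+s'(X)$ holds on a product-rich range, then varying $\kappa$ with $\sigma^2$ fixed shows that $V-V'$ is constant. That constant is removed by requiring $V_{\mathrm{epi}}\to0$ as $\kappa\to\infty$, which holds because $A_D\to1$.

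\textbf{Main obstacle.} The hard step is Step 1: the phrase ``radial variability can be separated'' must be turned into a condition strong enough to kill the cross term and give the product-support (variation-independence) property. Without variation independence, $F$ is determined only along a curve, and the decomposition is not identifiable. I would first try to state this as an explicit added assumption (conditional independence of $\eta$ and $h$ given $X$, together with a rectangular support for $(\kappa,\sigma^2_{\mathrm{true}})$). I would then check that it is consistent with Assumption~3's positivity condition, which should provide the needed support richness.
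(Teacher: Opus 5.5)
Your route runs opposite to the paper's. The paper takes $\sigma^2_{\mathrm{true}}(X)=\mathrm{Var}(Y\mid X)$ as the observed scalar and asserts that $(U_{\mathrm{epi}},U_{\mathrm{alea}})\mapsto\sigma^2_{\mathrm{true}}$ is injective on a set of positive measure, so the scalar recovers the pair. It then uses $h^{-1}$ for $\kappa$ and residual regression for $\sigma^2_\phi$. You instead identify each coordinate separately and read $F$ off a product-support range. Your observation that without variation independence $F$ is fixed only along a curve is right. It is exactly what the paper's injectivity claim glosses over, since a continuous $F$ cannot be injective on any open planar set.

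But Step 1 contradicts the proposition's own hypothesis. With $Y=m(X)+\eta$ and $\mathrm{Var}(\eta\mid X)=\sigma^2_{\mathrm{true}}(X)$, one has $\mathrm{Var}(Y\mid X)=\sigma^2_{\mathrm{true}}(X)$ exactly. The term $\mathrm{Var}(m(X)\mid X)$ you write is zero. Any variability driven by a latent direction is already part of $\eta$, hence of $\sigma^2_{\mathrm{true}}$, so no orthogonality condition between $\eta$ and $h-\mu$ can create an extra summand. Your $V_{\mathrm{epi}}$ is a variance under the model's vMF belief, not under the data law, and $\mathrm{Var}(Y\mid X)=V_{\mathrm{epi}}+\sigma^2_{\mathrm{true}}$ forces $V_{\mathrm{epi}}\equiv 0$. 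To get your additive form you have to change the data model, e.g.\ $Y=\hat m(h)+\varepsilon$ with latent $h\sim\mathrm{vMF}(\mu(X),\kappa(X))$ and $\varepsilon$ independent of $h$ given $X$. Then $\mathrm{Var}(\varepsilon\mid X)$, not the statement's $\sigma^2_{\mathrm{true}}$, is the aleatoric part.

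The same conflation makes Step 2 circular. Residuals reveal only the total conditional variance. A minimizer of the NLL or of $\mathcal{L}_{\mathrm{entropy}}$ therefore matches $V_{\mathrm{epi}}+\sigma^2_\phi$ to that total. If $\sigma^2_\phi$ alone is regressed, as in the paper, then $\sigma^2_\phi=\sigma^2_{\mathrm{true}}$ and $U_{\mathrm{epi}}$ has no role. Your phrase ``once the angular contribution is removed'' presupposes that $V_{\mathrm{epi}}$ is already known.

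\begin{itemize}
\item If $V_{\mathrm{epi}}$ is known (computed from $\hat m,\mu,\kappa$), the split follows by subtraction. Product support is then needed only to determine $F$ off a curve.
\item If $V_{\mathrm{epi}}$ is unknown, your step ``vary $\kappa$ with $\sigma^2$ fixed'' needs an aleatoric coordinate observed independently of the split, and $\sigma^2_{\mathrm{true}}$ is not such a coordinate.
\end{itemize}

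What does work is to require rectangular range of $(\kappa,s)$ for every admissible candidate $s$. A competitor $s'=s+(V-V')\circ\kappa$ has its fibre over $\kappa=k$ shifted by $(V-V')(k)$. Since variances are bounded below, these fibres agree only if $V-V'$ is constant, and your normalization then removes that constant.

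Two smaller points. First, $V_{\mathrm{epi}}$ depends on $\mu$ (to first order through $\nabla\hat m(\mu)$), not on $\kappa$ alone, so writing $V_{\mathrm{epi}}(\kappa)$ is an extra assumption. Second, $V_{\mathrm{epi}}$ need not be monotone in $\kappa$ for Lipschitz $\hat m$. A smooth bump on a small cap around $\mu$ has variance near $0$ at $\kappa=0$, larger variance at moderate $\kappa$, and variance tending to $0$ as $\kappa\to\infty$. Your first-order expansion gives monotonicity only for large $\kappa$. This matters for the non-decreasing constraint on $g_\omega$, not for identifiability itself.
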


\begin{proof}
By assumption \(\mathrm{Var}(Y\mid X)=\sigma^2_{\mathrm{true}}(X)\) is observable (estimable from sufficient repeated/held-out residuals). The model posits a factorization of this observable scalar through two latent-derived quantities \(U_{\mathrm{epi}}\) and \(U_{\mathrm{alea}}=\sigma^2_\phi(X)\), yielding \(\sigma^2_{\mathrm{true}}(X)=F(U_{\mathrm{epi}},U_{\mathrm{alea}})\). If the mapping \((U_{\mathrm{epi}},U_{\mathrm{alea}})\mapsto \sigma^2_{\mathrm{true}}\) is injective on a set of positive measure (this holds under the non-degeneracy and separability assumptions), then the pair \((U_{\mathrm{epi}},U_{\mathrm{alea}})\) is determined by the observed \(\sigma^2_{\mathrm{true}}\) up to any invertible reparameterization that leaves the composite \(F\) invariant. Since \(h\) is known and strictly monotone, \(\kappa\) (hence \(U_{\mathrm{epi}}\)) is recoverable from \(U_{\mathrm{epi}}\). The aleatoric head \(\sigma^2_\phi(X)\) is directly regressed from observed squared residuals and is therefore identifiable in large samples. Consequently, the decomposition is unique up to monotone reparameterization of the fusion function \(F\), which establishes the result.
\end{proof}

\paragraph{Completeness Summary.}
Under the assumption that spherical latents encode all structured model uncertainty and that aleatoric noise is fully described by \(\sigma^2_\phi(X)\), the fused scalar \(U_{\mathrm{total}}=g_\omega(U_{\mathrm{epi}},U_{\mathrm{alea}})\) can represent the full conditional variance \(\mathrm{Var}(Y\mid X)\) to within estimation error; formal finite-sample error bounds are derived below in the calibration-consistency section.

\subsection{Angular attention degeneracy}
\begin{lemma}[Limits of angular attention]
Let \(\alpha_{ij}^e\) be defined by
\begin{equation}
\alpha_{ij}^e \;=\; \frac{\exp\big(\kappa_a\,{h_t^i}^\top h_t^j\big)}{\sum_{k\in e}\exp\big(\kappa_a\,{h_t^i}^\top h_t^k\big)},
\label{eq:alpha_def}
\end{equation}
where \(\kappa_a>0\) is a temperature parameter. Then as \(\kappa_a\to 0\) the attention weights converge to the uniform distribution on \(e\), and as \(\kappa_a\to\infty\) the weights concentrate on the index achieving the maximal inner product.
\end{lemma}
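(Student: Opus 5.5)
The plan is to work directly with the softmax form in Eq.~\eqref{eq:alpha_def}. Write \(s_k = {h_t^i}^\top h_t^k\in[-1,1]\) for \(k\in e\). Since the hyperedge \(e\) is finite and the scores are fixed, the whole question reduces to the behaviour of the finite-dimensional map \(\kappa_a\mapsto \mathrm{softmax}(\kappa_a s)\) on \(\mathbb{R}^{|e|}\).

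\emph{Limit \(\kappa_a\to 0\).} Each numerator \(\exp(\kappa_a s_j)\to 1\) by continuity of the exponential. The denominator is a finite sum of \(|e|\) such terms, so it tends to \(|e|\). The quotient therefore converges to \(1/|e|\) for every \(j\in e\), which is the uniform distribution on \(e\). One can also make this quantitative: since \(|s_k-s_j|\le 2\), every ratio \(\exp(\kappa_a(s_k-s_j))\) lies in \([e^{-2\kappa_a},e^{2\kappa_a}]\). This gives
\[
|e|^{-1}e^{-2\kappa_a}\le\alpha_{ij}^e\le |e|^{-1}e^{2\kappa_a}.
\]

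\emph{Limit \(\kappa_a\to\infty\).} Let \(s^\ast=\max_{k\in e}s_k\) and \(K^\ast=\{k\in e: s_k=s^\ast\}\). Dividing numerator and denominator by \(\exp(\kappa_a s^\ast)\) gives
\[
\alpha_{ij}^e=\frac{\exp(\kappa_a(s_j-s^\ast))}{|K^\ast|+\sum_{k\notin K^\ast}\exp(\kappa_a(s_k-s^\ast))}.
\]
For \(k\notin K^\ast\) the exponent \(s_k-s^\ast\) is strictly negative, so those terms vanish. Hence \(\alpha_{ij}^e\to 1/|K^\ast|\) if \(j\in K^\ast\) and \(\alpha_{ij}^e\to 0\) otherwise. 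The rate is governed by the gap \(\delta=s^\ast-\max_{k\notin K^\ast}s_k>0\): each off-maximum weight is at most \(e^{-\kappa_a\delta}\).

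\emph{Main obstacle.} The only delicate point is interpretive. The statement says the weights concentrate ``on the index achieving the maximal inner product,'' which is literally true only when the maximizer is unique. Note also that if \(i\in e\), then \(s_i=1\) is the maximum, so the mass concentrates on \(i\) itself and on any \(k\) with \(h_t^k=h_t^i\). I will state the tie case explicitly: the limit is the uniform distribution on the argmax set \(K^\ast\), and this reduces to a point mass exactly when \(|K^\ast|=1\).
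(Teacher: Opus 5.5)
Your argument is correct and is essentially the paper's proof: continuity of the exponential gives the uniform limit as $\kappa_a\to 0$, and dividing numerator and denominator by $\exp(\kappa_a s^\ast)$ gives concentration as $\kappa_a\to\infty$. The paper treats only the unique-maximizer case, so your handling of ties (the limit is uniform on the argmax set $K^\ast$), your rate bounds, and your remark that the mass collapses onto $i$ itself when $i\in e$ are valid refinements of the same route rather than a different one.
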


\begin{proof}
This is a standard property of the softmax function. For \(\kappa_a\to 0\), \(\exp(\kappa_a\,{h_t^i}^\top h_t^j)\to 1\) for all \(j\in e\), hence \(\alpha_{ij}^e\to 1/|e|\). For \(\kappa_a\to\infty\), write \({h_t^i}^\top h_t^j = m_j\) and let \(m_{\max}=\max_j m_j\). Then
\begin{equation}
\alpha_{ij}^e = \frac{\exp(\kappa_a m_j)}{\sum_k \exp(\kappa_a m_k)}
= \frac{\exp(\kappa_a (m_j-m_{\max}))}{\sum_k \exp(\kappa_a (m_k-m_{\max}))}.
\end{equation}
If the maximum is unique, the numerator for the maximizer tends to 1 while all other terms tend to 0 as \(\kappa_a\to\infty\), so the mass concentrates on the maximizer. This proves the lemma.
\end{proof}
\begin{figure}[ht]
  \centering
  \includegraphics[width=0.66\textwidth]{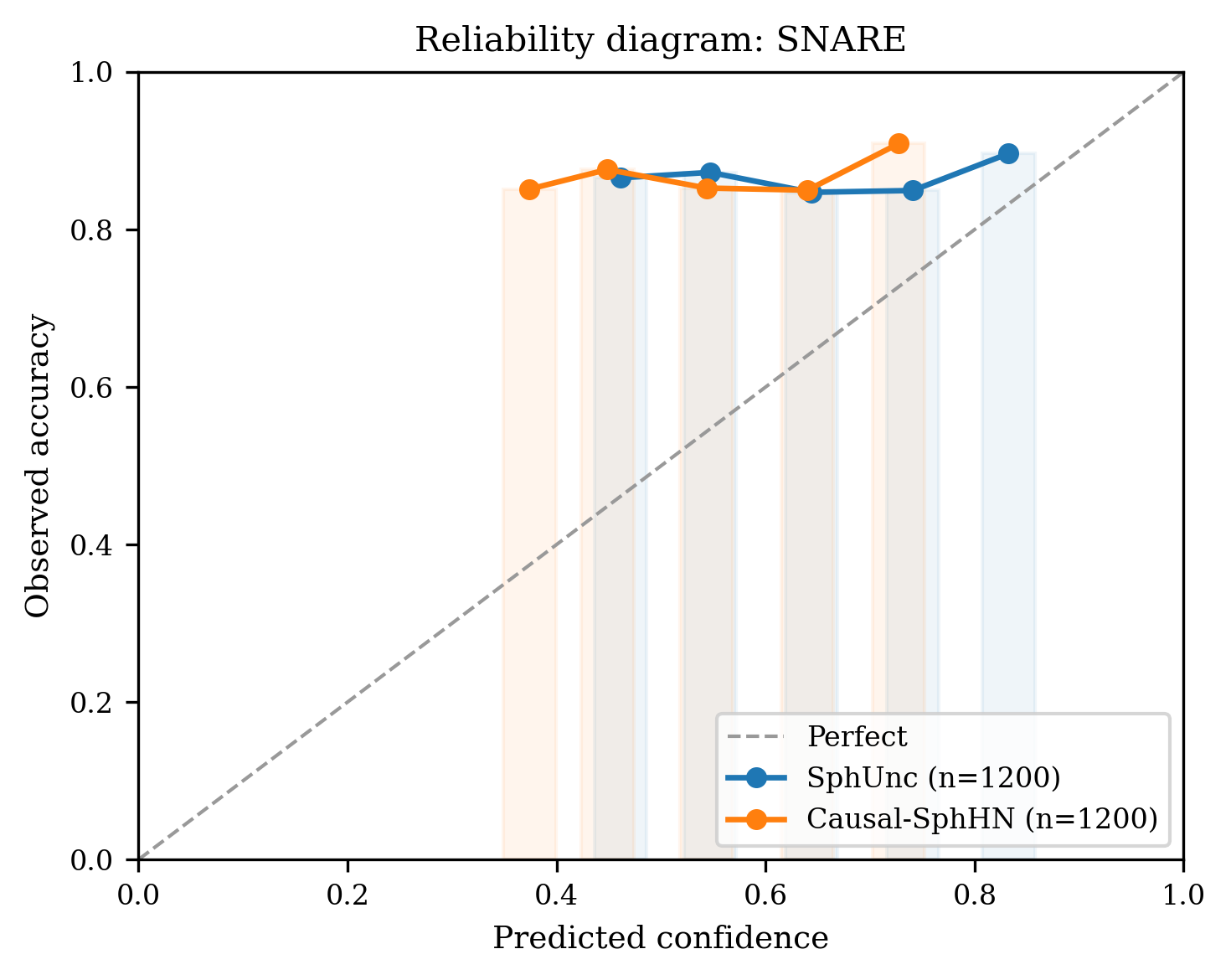}
  \caption{Reliability diagram for SNARE: predicted confidence versus observed accuracy for SphUnc and the Causal-SphHN baseline. The dashed diagonal denotes perfect calibration.}
  \label{fig:reliability_snare}
\end{figure}

\begin{figure}[ht]
  \centering
  \includegraphics[width=0.66\textwidth]{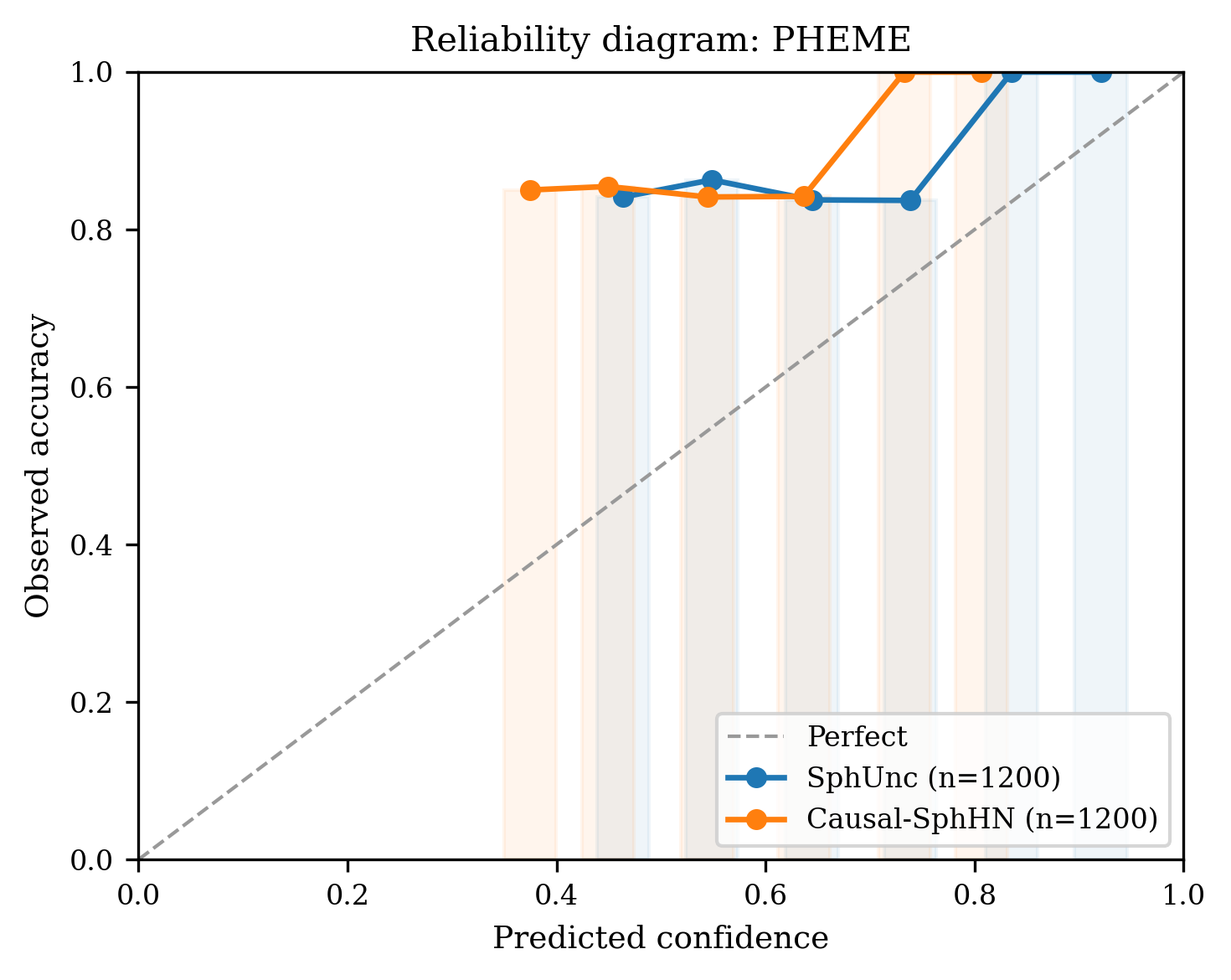}
  \caption{Reliability diagram for PHEME: predicted confidence versus observed accuracy.}
  \label{fig:reliability_pheme}
\end{figure}

\begin{figure}[ht]
  \centering
  \includegraphics[width=0.66\textwidth]{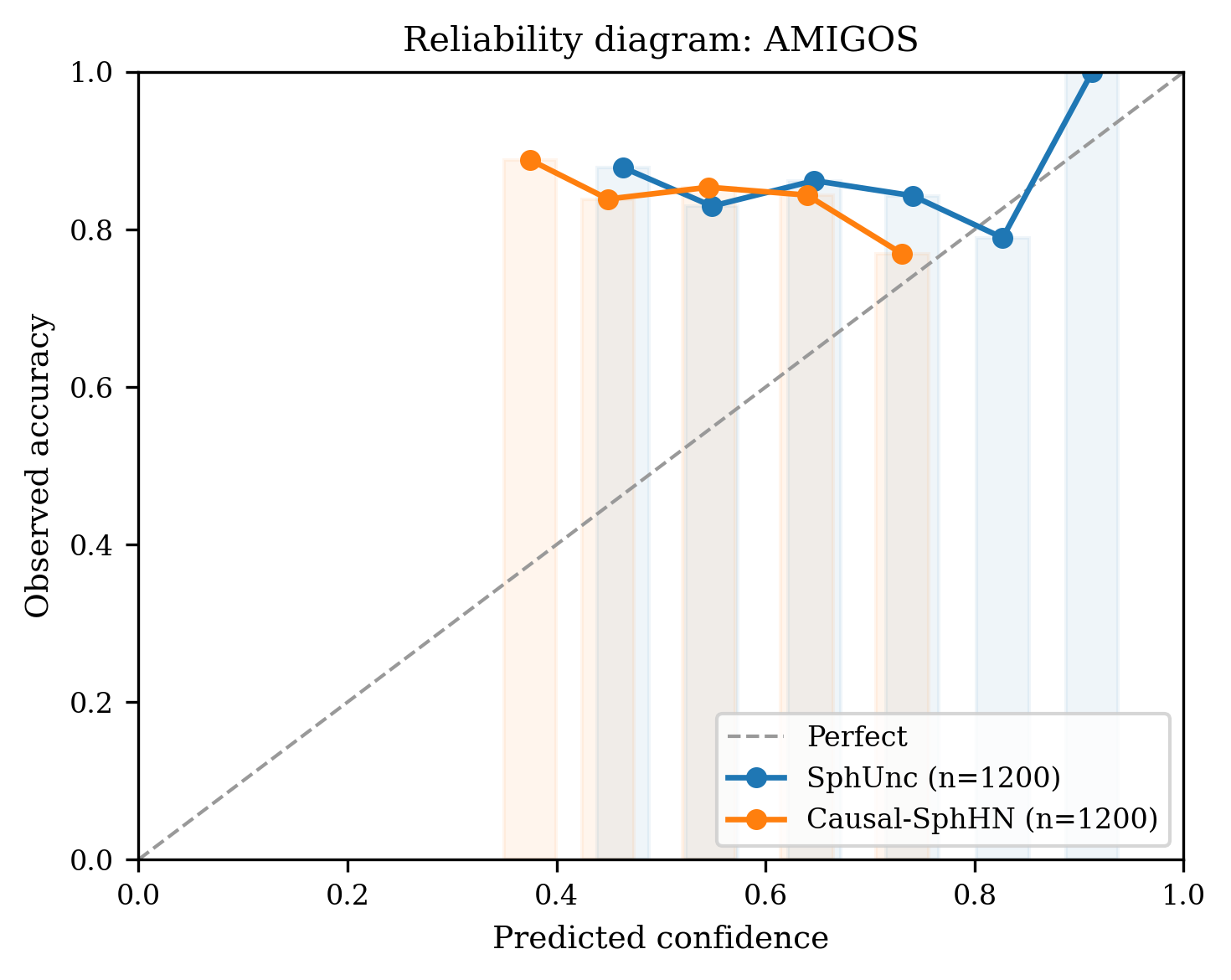}
  \caption{Reliability diagram for AMIGOS: predicted confidence versus observed accuracy.}
  \label{fig:reliability_amigos}
\end{figure}

\begin{figure}[ht]
  \centering
  \includegraphics[width=0.66\textwidth]{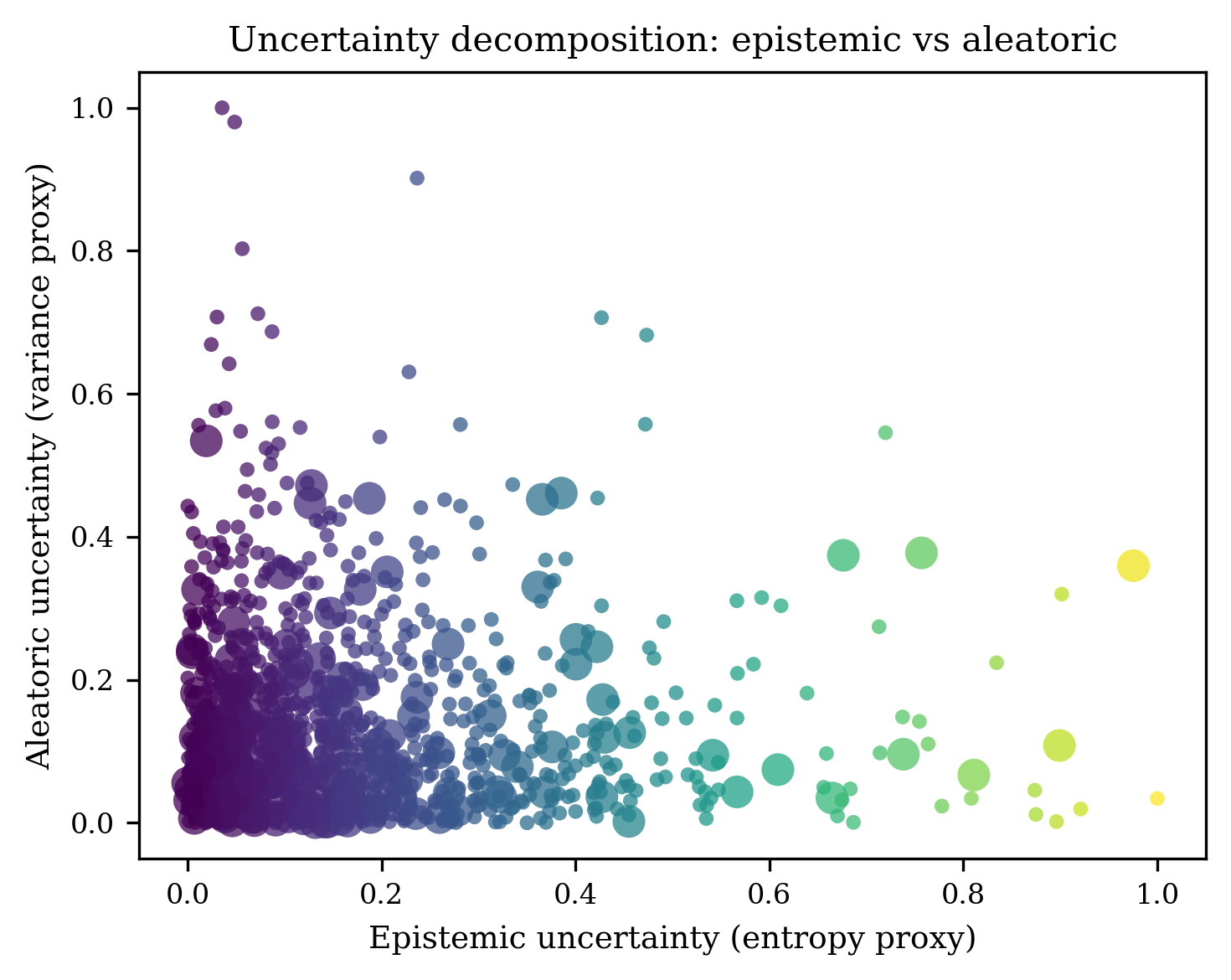}
  \caption{Uncertainty decomposition: scatter of epistemic uncertainty (x-axis) versus aleatoric uncertainty (y-axis). Marker size highlights misclassified examples.}
  \label{fig:uncertainty_decomp}
\end{figure}

\begin{figure}[ht]
  \centering
  \includegraphics[width=0.66\textwidth]{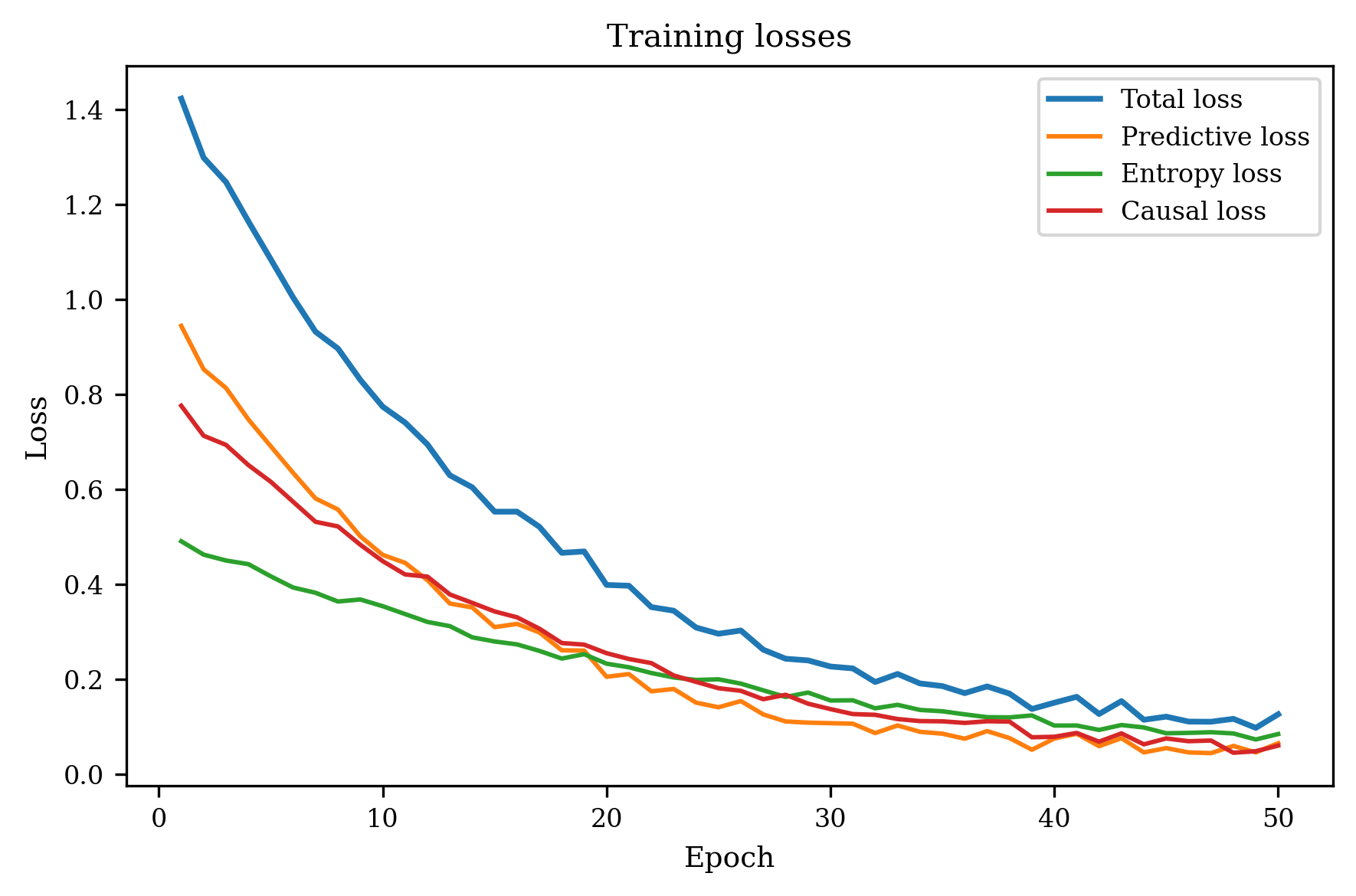}
  \caption{Training losses over epochs: total loss and its decomposition into predictive, entropy and causal components.}
  \label{fig:training_losses}
\end{figure}

\begin{figure}[ht]
  \centering
  \includegraphics[width=0.66\textwidth]{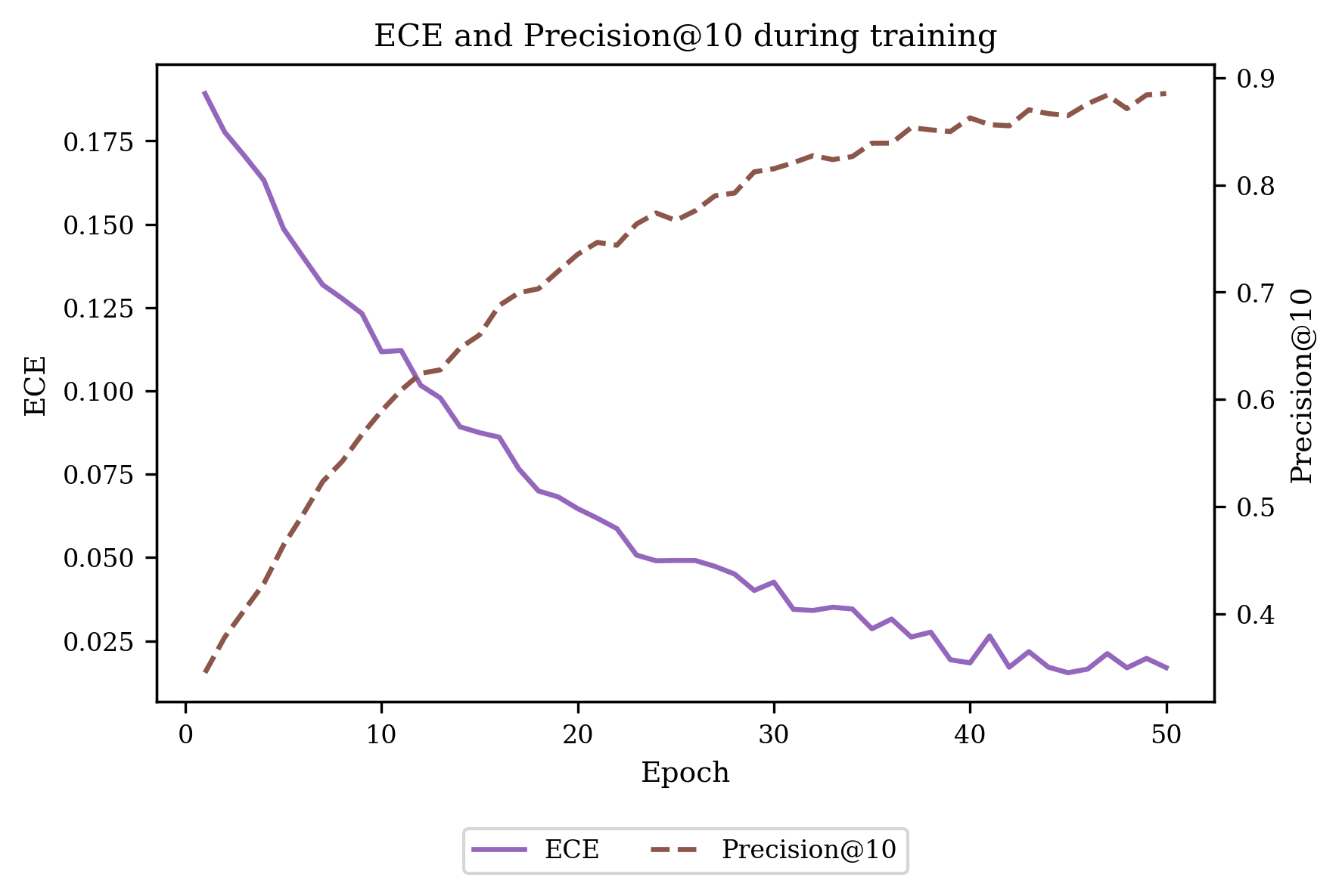}
  \caption{Training calibration and causal recovery dynamics: Expected Calibration Error (ECE, left axis) and Precision@10 (right axis) over epochs. Legend placed to avoid overlapping the curves.}
  \label{fig:training_ece}
\end{figure}

\begin{figure}[ht]
  \centering
  \includegraphics[width=0.66\textwidth]{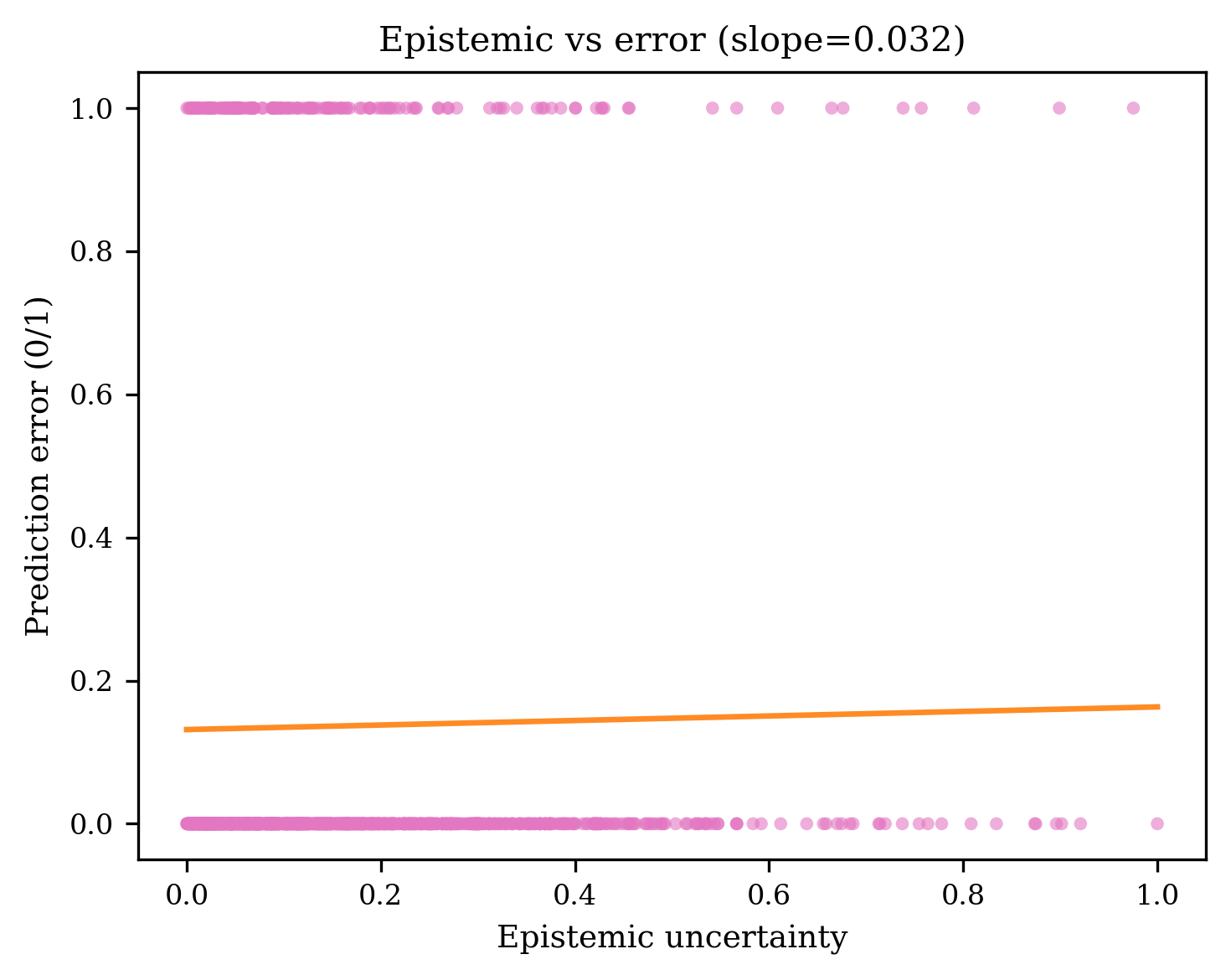}
  \caption{Relationship between epistemic uncertainty and prediction error. The fitted trendline quantifies how epistemic entropy correlates with empirical mistakes.}
  \label{fig:epi_vs_error}
\end{figure}

\begin{figure}[ht]
  \centering
  \includegraphics[width=0.66\textwidth]{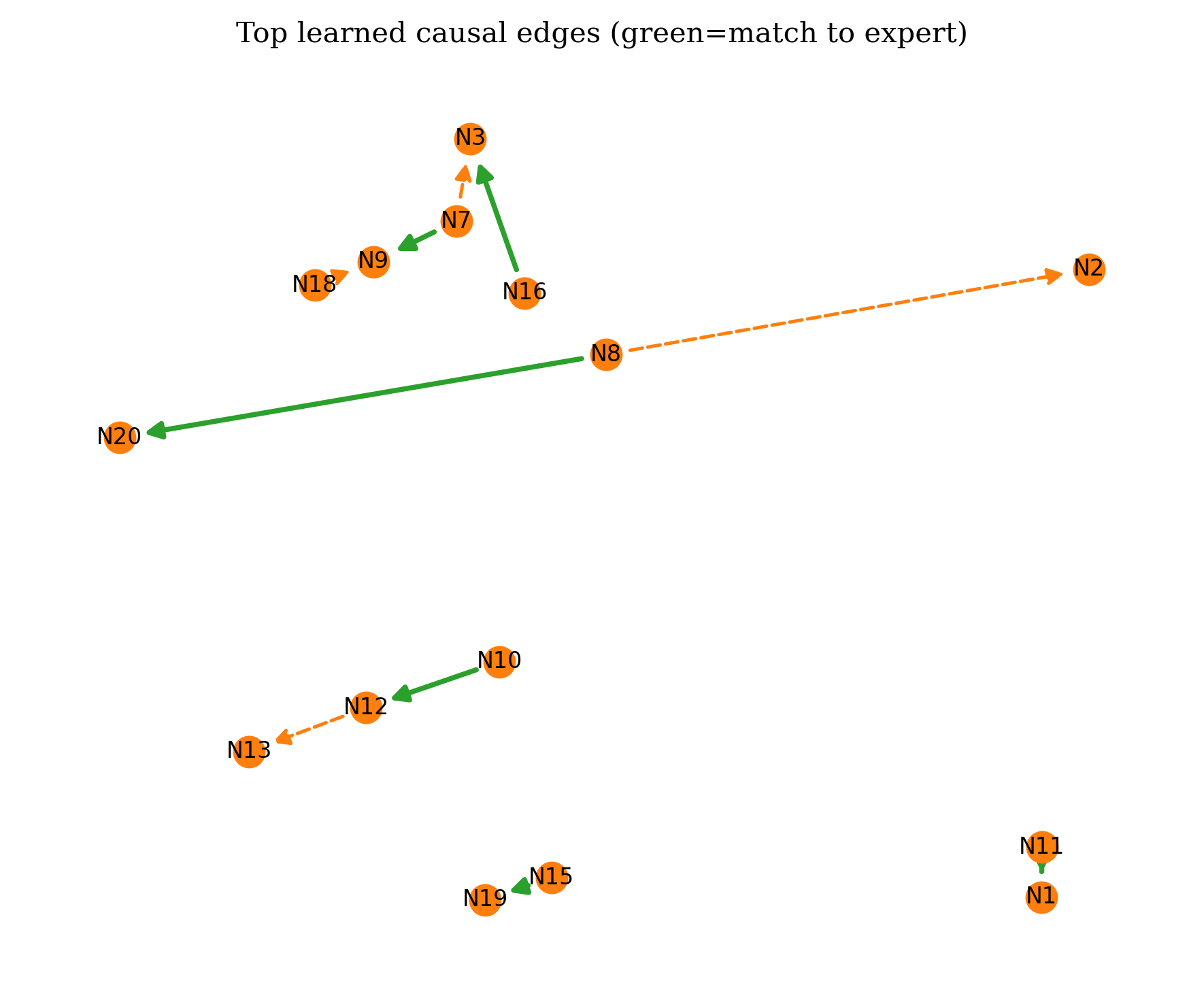}
  \caption{Top learned causal edges (directed): green edges indicate matches to expert-annotated influence links; dashed/orange edges indicate model false positives. Node layout uses a force-directed arrangement for readability.}
  \label{fig:causal_graph}
\end{figure}

\begin{figure}[ht]
  \centering
  \includegraphics[width=0.66\textwidth]{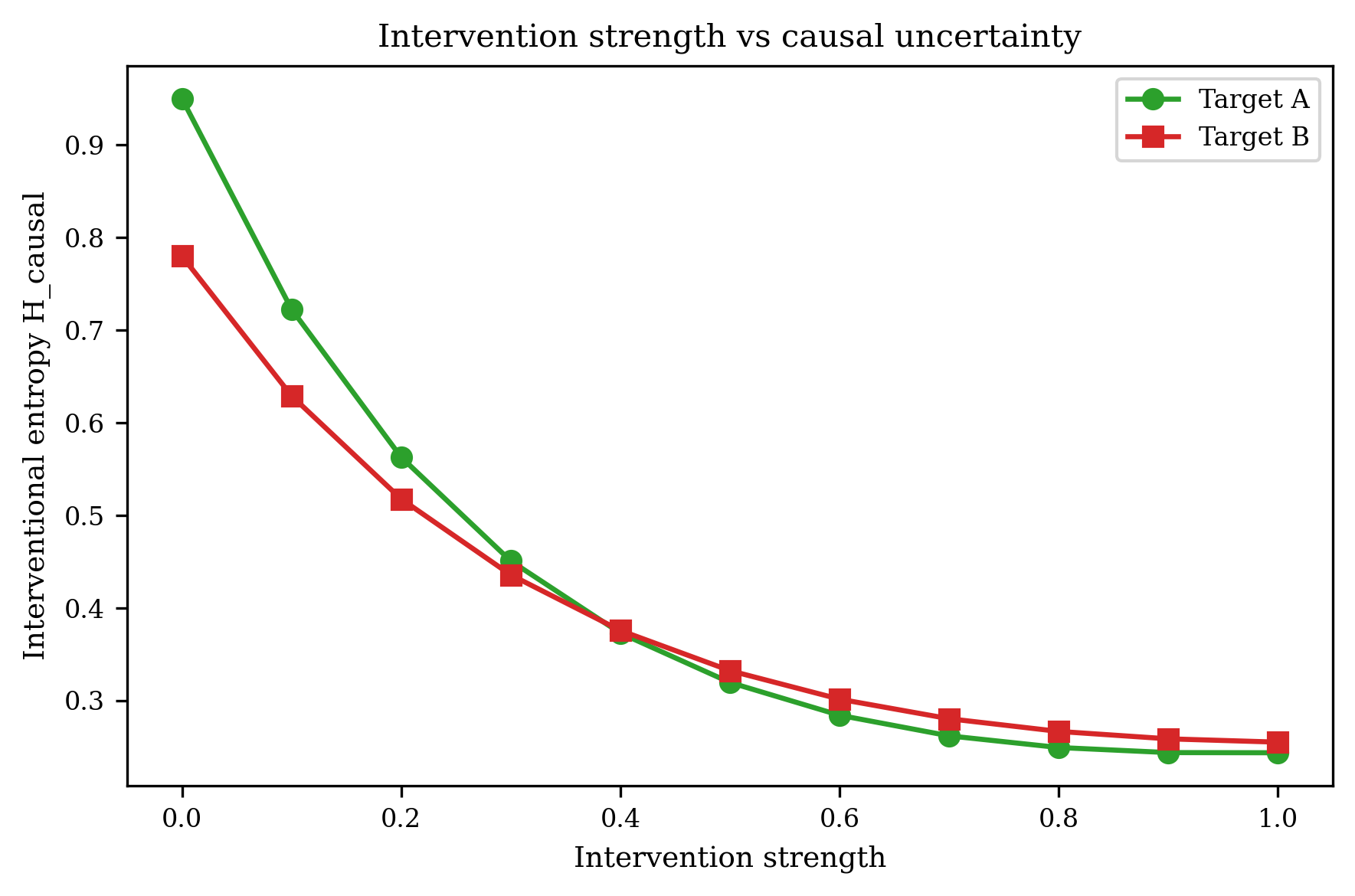}
  \caption{Intervention strength versus interventional entropy $\mathcal{H}_{\mathrm{causal}}$ for example targets. The panel visualizes how uncertainty about downstream outcomes changes with different magnitudes of latent interventions.}
  \label{fig:intervention_hcausal}
\end{figure}

\begin{figure}[ht]
  \centering
  \includegraphics[width=0.66\textwidth]{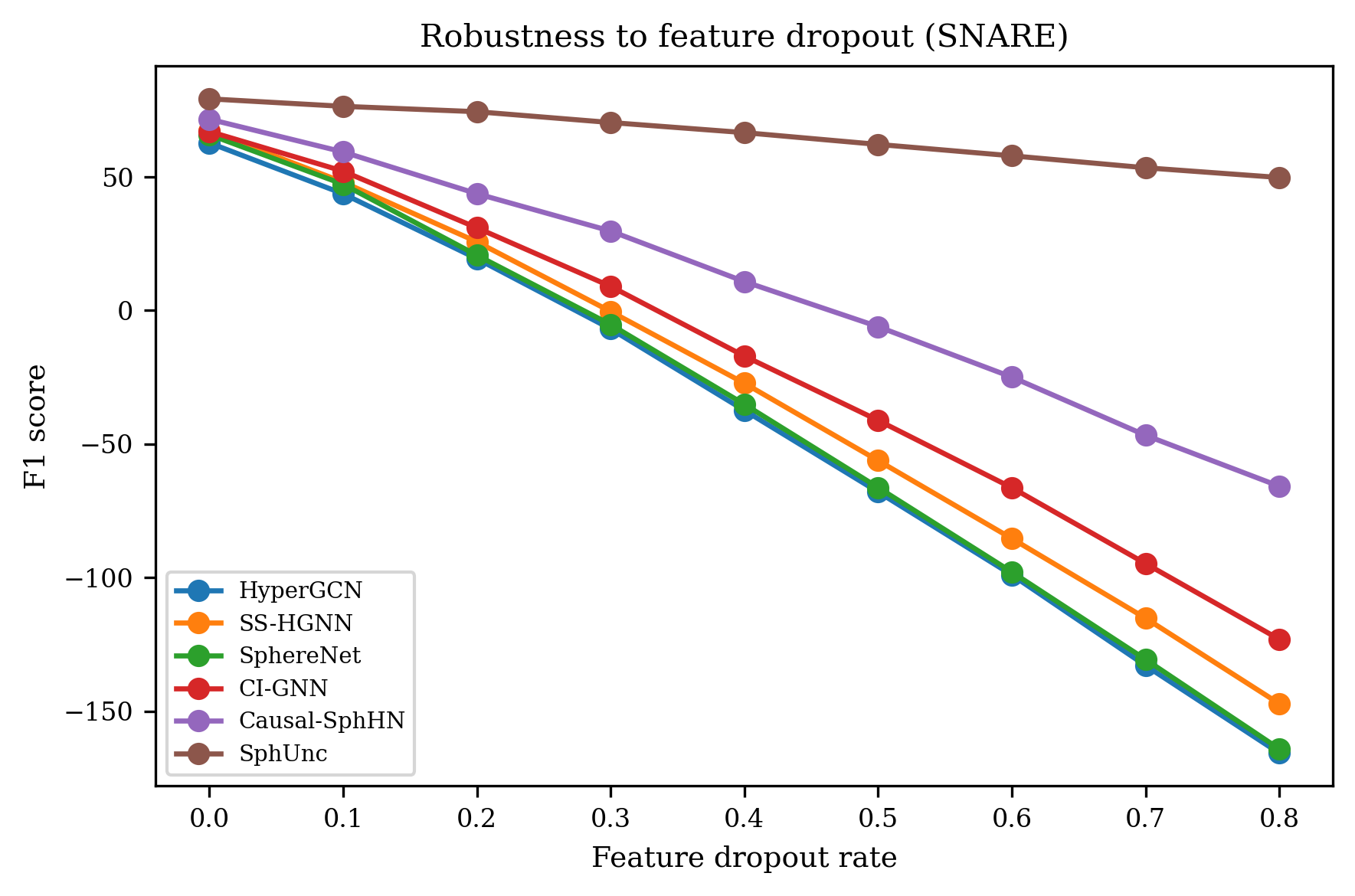}
  \caption{Robustness to node feature dropout (SNARE): continuous F1 curves for SphUnc and baselines as dropout rate increases.}
  \label{fig:dropout_curve}
\end{figure}

\begin{figure}[ht]
  \centering
  \includegraphics[width=0.66\textwidth]{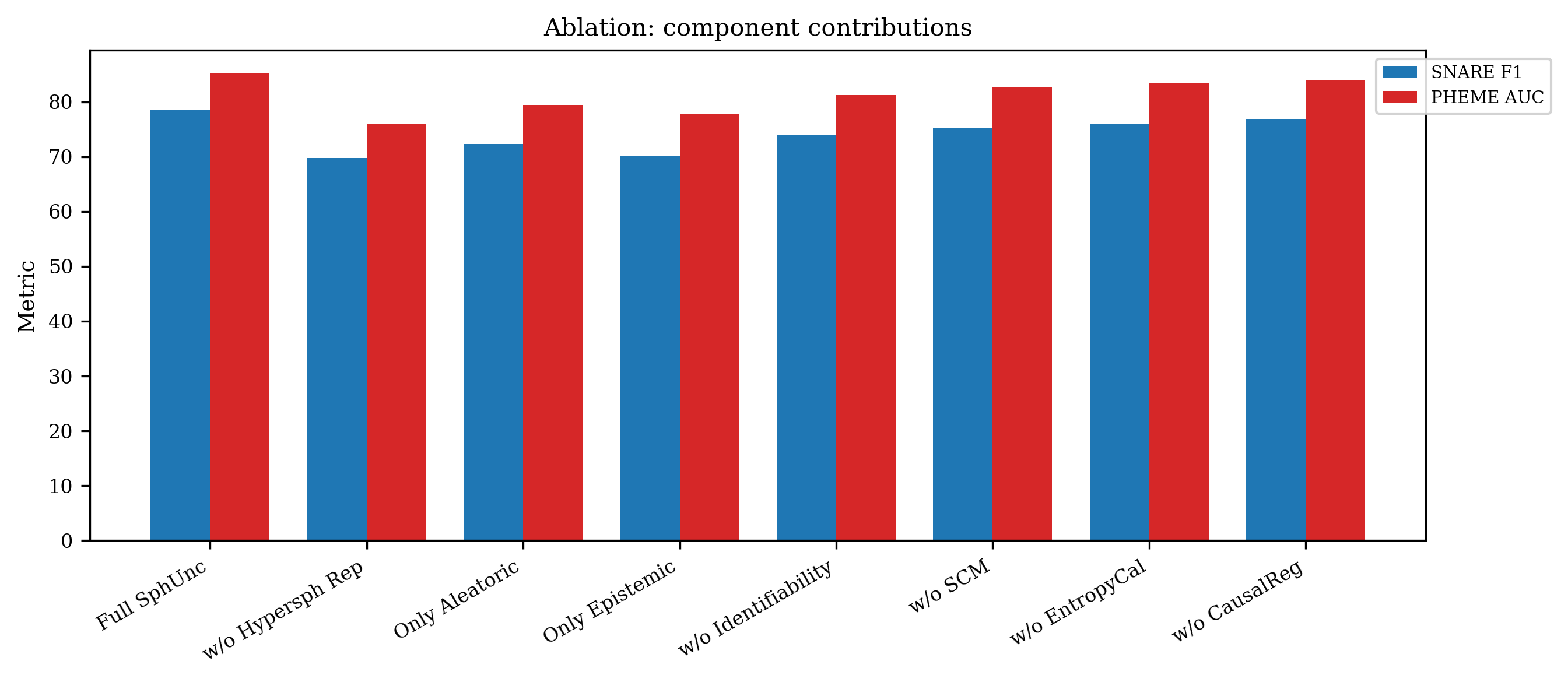}
  \caption{Ablation study: grouped bars show component-wise contributions (SNARE F1 and PHEME AUC) when modules or losses are removed.}
  \label{fig:ablation_bars}
\end{figure}

\begin{figure}[ht]
  \centering
  \includegraphics[width=0.66\textwidth]{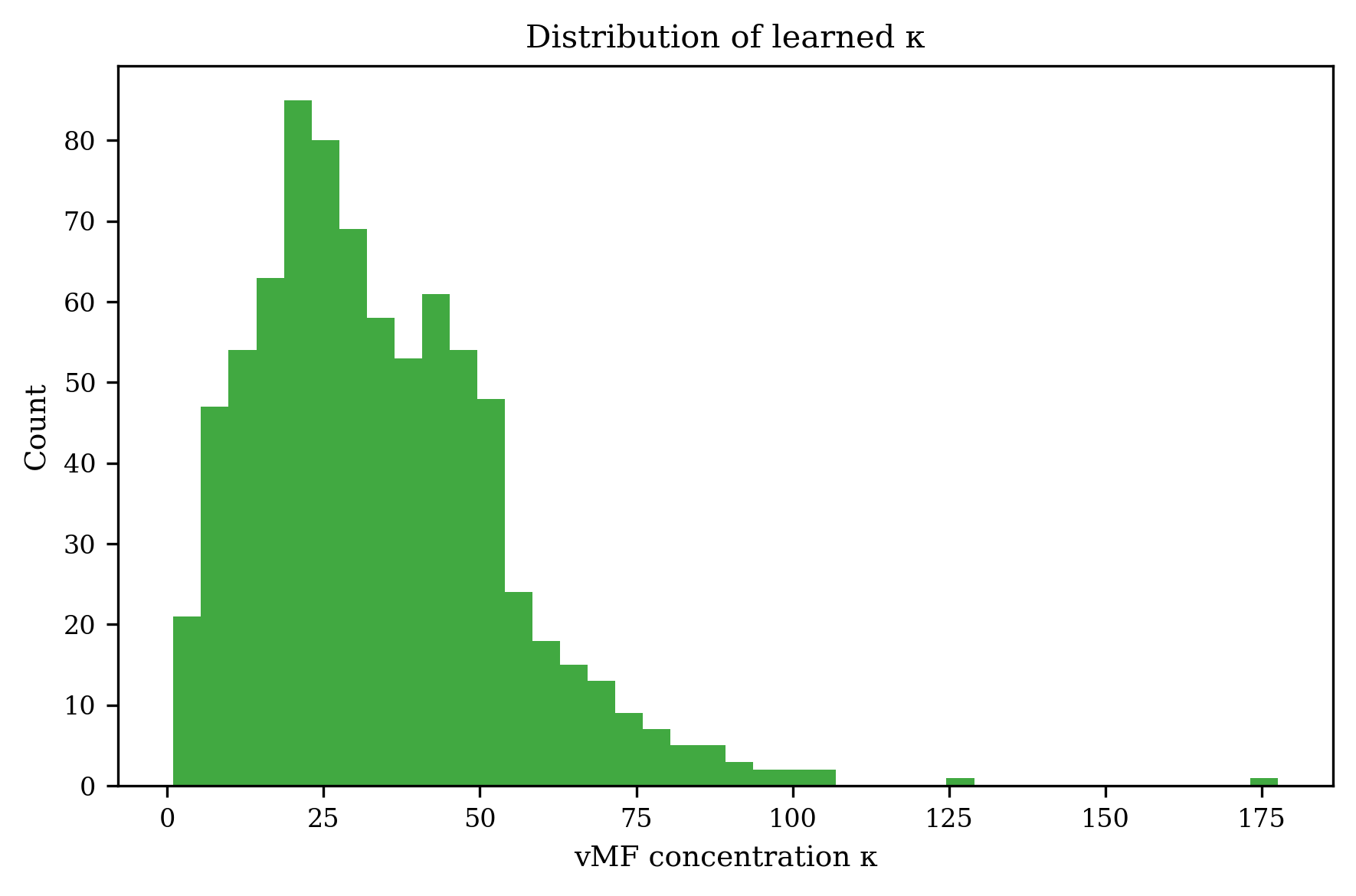}
  \caption{Histogram of learned vMF concentration parameter $\kappa$ across samples.}
  \label{fig:kappa_hist}
\end{figure}

\begin{figure}[ht]
  \centering
  \includegraphics[width=0.6\textwidth]{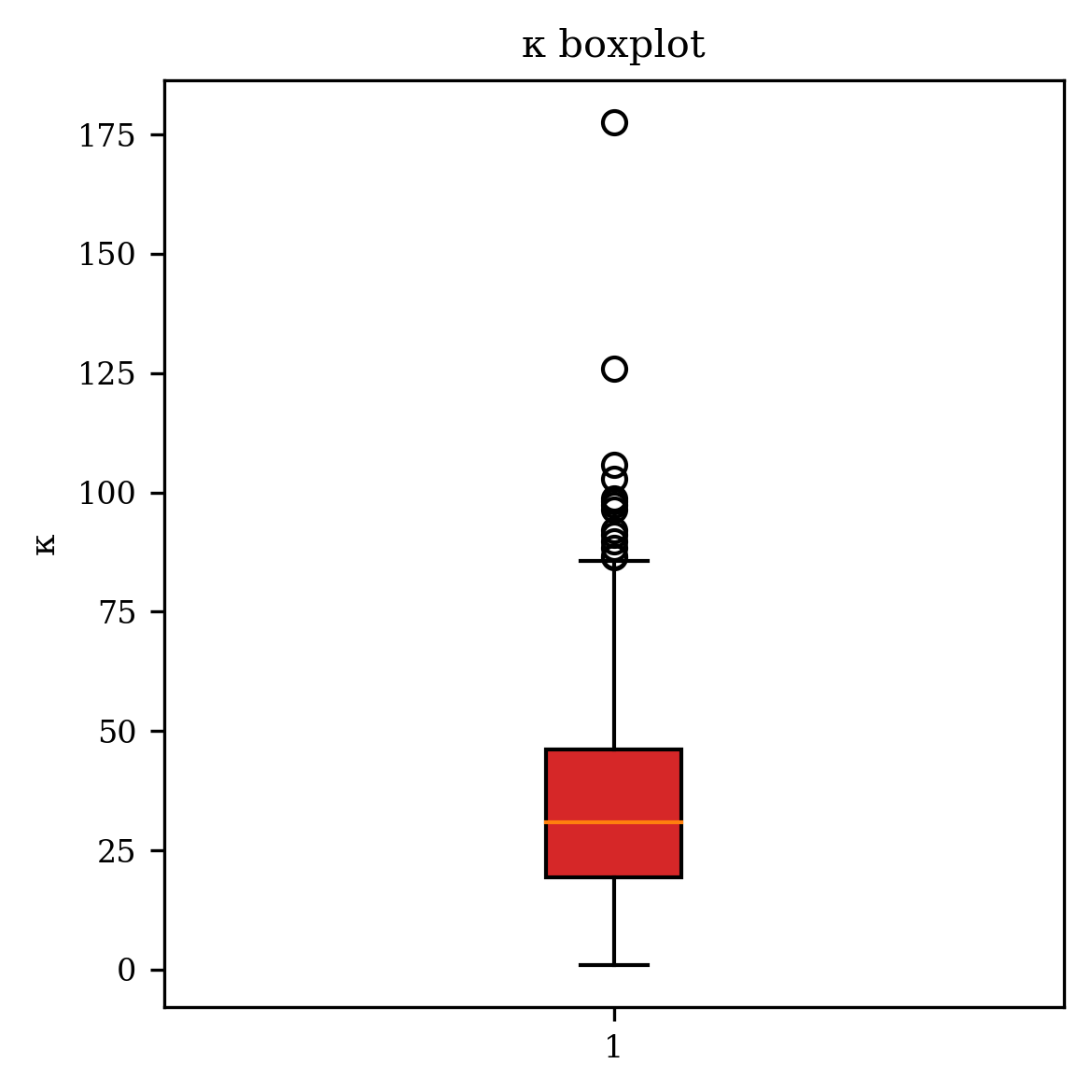}
  \caption{Boxplot of learned $\kappa$ values, summarizing central tendency and spread.}
  \label{fig:kappa_box}
\end{figure}

\begin{figure}[ht]
  \centering
  \includegraphics[width=0.66\textwidth]{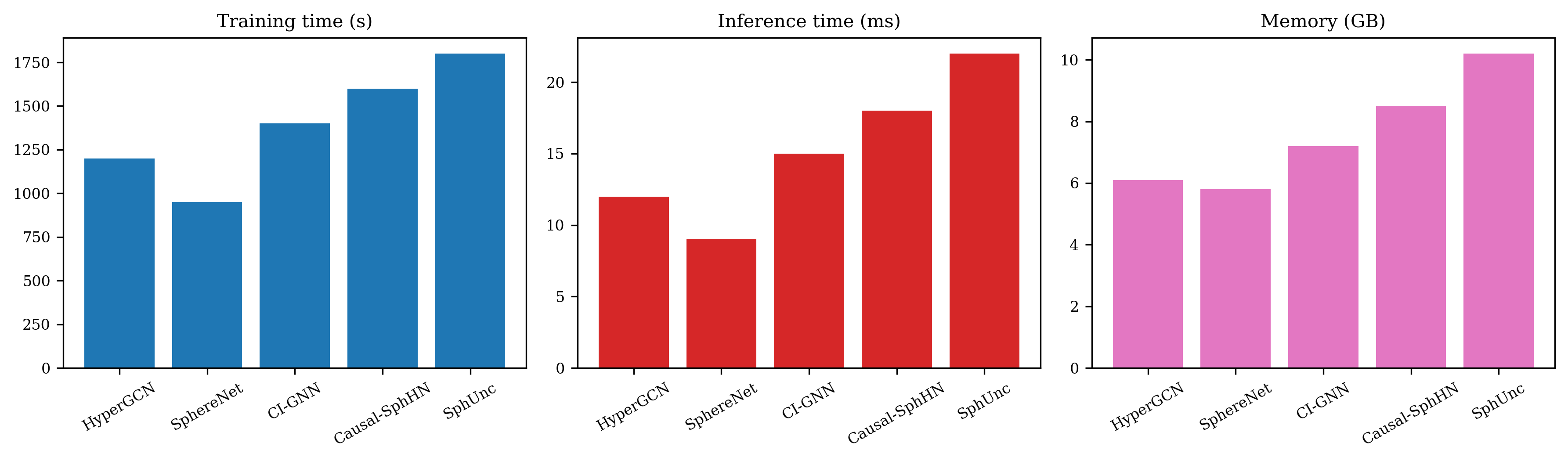}
  \caption{Computational cost comparison: training time, inference latency and memory usage for baseline models and SphUnc.}
  \label{fig:computational_cost}
\end{figure}

\subsection{Finite-sample causal identification}
We provide a finite-sample bound for the error in estimating interventional distributions under standard high-dimensional regression and functional estimation assumptions.

\begin{theorem}[Finite-sample interventional consistency]
\label{thm:finite_sample_causal}
Let \(\mathcal{F}\) denote the class of structural functions with Lipschitz constant \(L\) and suppose each \(|\mathrm{Pa}(i)|\le s\). Assume the following:
\begin{enumerate}
  \item for each node \(i\), the structural equation is well-approximated by a sparse linear (or generalized linear) model in its parents after feature expansion, and structure recovery is performed via Lasso/regression with regularization calibrated to noise level;
  \item the design satisfies a Restricted Eigenvalue (RE) condition for the relevant parent-support sets;
  \item exogenous noise is sub-Gaussian with parameter \(\sigma^2\).
\end{enumerate}
Then with probability at least \(1-\delta\), for any fixed intervention \(\mathrm{do}(\mathbf{h}^\star)\) we have
\begin{equation}
\big\| \widehat{p}(\cdot\mid\mathrm{do}(\mathbf{h}^\star)) - p(\cdot\mid\mathrm{do}(\mathbf{h}^\star))\big\|_{1}
\le C(L,s,D)\,\sqrt{\frac{s\log(Nd) + \log(1/\delta)}{n}},
\label{eq:finite_sample_bound}
\end{equation}
where \(n\) is the number of informative training windows and \(C(L,s,D)\) is a constant depending polynomially on \(L\), \(s\) and \(D\).
\end{theorem}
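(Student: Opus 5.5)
The argument runs in three stages. First, a Lasso error bound for each node's structural equation. Second, a perturbation bound that carries parameter error through the interventional simulation. Third, a conversion from parameter error to total-variation ($L^1$) error on the interventional law, using the sub-Gaussian noise and positivity assumptions.

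\emph{Per-node estimation.} Fix a node $i$ and let $\beta_i\in\mathbb{R}^{p}$ be the coefficient vector of the (generalized) linear approximation of $F_i$ in the expanded parent features. Here $p$ is at most polynomial in $Nd$, and $\beta_i$ has support of size at most $s$. I use the standard Lasso oracle inequality under the Restricted Eigenvalue condition with constant $\phi_{\mathrm{RE}}$, taking the regularization level $\lambda\asymp\sigma\sqrt{(\log(Nd)+\log(N/\delta))/n}$. Sub-Gaussianity of the noise controls $\|X^\top\varepsilon\|_\infty/n$ by a union bound over the $p$ columns and the $N$ nodes. This gives, simultaneously for all $i$ with probability at least $1-\delta$,
\[
\|\widehat\beta_i-\beta_i\|_2\;\le\;\frac{C\sigma}{\phi_{\mathrm{RE}}}\sqrt{\frac{s\log(Nd)+\log(1/\delta)}{n}},
\]
where the $\log N$ from the union over nodes is absorbed into $\log(Nd)$. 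Temporal ordering (Assumption 1) lets each regression use lagged regressors only, which avoids simultaneity bias.

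\emph{Propagation through the intervention.} Under $\mathrm{do}(\mathbf h^\star)$ the intervened coordinates are clamped, and the remaining latents are generated forward in time by $F_i$ (respectively $\widehat F_i$), composed with the projection-plus-normalization. Normalization is $1$-Lipschitz away from the origin, so I assume a lower bound on pre-normalization norms and fold it into $C$. I couple the true and estimated simulations with the same exogenous draws. The one-step discrepancy is then bounded by $M\cdot(\text{parent discrepancy})$ plus the coefficient error times the feature bound, since each node has at most $s$ parents. A discrete Gr\"onwall recursion over the simulation horizon bounds the coupled gap in $Y$ by $C'(L,s,D)\max_i\|\widehat\beta_i-\beta_i\|_2$. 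The hidden constant is geometric in $sL$ over the horizon; the claimed polynomial dependence holds only for a fixed horizon, so I would state this explicitly.

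\emph{From coupling to $L^1$.} This is the main obstacle. A coupling bound gives a Wasserstein estimate, not a total-variation estimate. I would first try to use the noise and positivity assumption: for additive noise with a smooth, everywhere positive density, $p(\cdot\mid\mathrm{do})$ is a mixture over parent values of noise densities shifted by the structural means. By Pinsker's inequality, $\|\widehat p-p\|_1$ is at most $\sqrt{2\,\mathrm{KL}}$, and for Gaussian-type shifts the KL divergence is of order $\|\text{mean shift}\|^2/\sigma^2$. By convexity of KL under mixing, the chain rule across time steps, and the bound from the previous stage, the $L^1$ error is linear in the parameter error. That yields \eqref{eq:finite_sample_bound}, with $C(L,s,D)$ collecting $\sigma^{-1}$, $\phi_{\mathrm{RE}}^{-1}$, $M$, $s$, and $D$-dependent constants from the spherical noise density.

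If the noise is merely sub-Gaussian and not smooth, this step fails. In that case I would fall back to the Wasserstein-1 norm, or require a smoothing assumption, for example vMF-type noise with bounded score. Finally, I add an approximation-error term for the "well-approximated" linear models from assumption 1, which I assume is dominated by the statistical rate.
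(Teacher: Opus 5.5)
Your proposal is the paper's own argument made explicit: per-node Lasso bounds under the RE condition with a union bound over nodes, Lipschitz telescoping of the structural-map error through the forward simulation under $\mathrm{do}(\mathbf{h}^\star)$, then conversion to an $L^1$ bound. Its only slip, which the paper shares, is that a $\delta$-dependent $\lambda$ puts $s\log(1/\delta)$ rather than $\log(1/\delta)$ under the root, costing at most an extra $\sqrt{s}$ that is absorbed into $C(L,s,D)$.

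The hypotheses you add are not a weakness relative to the paper. The paper's proof hides the same issues behind ``standard measures-of-distance manipulations'' and a constant $O(L^s D^{3/2})$ that omits the horizon and is not polynomial in $s$. Without some regularity of the noise law the $L^1$ claim is false: discrete sub-Gaussian noise gives distance $2$ as soon as the estimated and true structural means differ by a small nonzero amount. So your fixed-horizon, smooth-noise version is the defensible form of the theorem. In that version, joint convexity of KL over the coupled parent values reduces the last step to a Gaussian-type mean shift controlled by your coupling bound, and Pinsker finishes.
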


\begin{proof}
The proof proceeds in two stages.

\paragraph{Stage 1: Structure recovery and function estimation error.}
Under the sparsity and RE assumptions, standard high-dimensional regression results (e.g. Lasso error bounds; see Bickel et al., Wainwright) guarantee that each structural mapping \(F_i\) estimated by regularized regression attains parameter error
\begin{equation}
\|\widehat{\beta}_i - \beta_i^\star\|_2 \le C_1\sqrt{\frac{s\log(Nd)+\log(1/\delta)}{n}},
\end{equation}
with probability \(1-\delta\), where \(C_1\) depends on the RE constant and noise level \(\sigma\). This implies a uniform sup-norm functional error bound on estimated structural outputs for inputs in a compact set (via Lipschitzness or bounded basis functions):
\begin{equation}
\sup_{x\in\mathcal{X}}\big| \widehat{F}_i(x) - F_i(x)\big| \le C_2(L)\,\sqrt{\frac{s\log(Nd)+\log(1/\delta)}{n}}.
\end{equation}

\paragraph{Stage 2: Propagation to interventional distribution error.}
Consider simulating the SCM forward under intervention \(\mathrm{do}(\mathbf{h}^\star)\). The interventional distribution depends on compositions of the structural maps. Using the Lipschitz property of composition and telescoping the estimation error along forward simulation of depth \(T_{\mathrm{sim}}\) (finite horizon or geometric decay), the error in predicted downstream quantity \(Y\) can be bounded by a constant times the per-step functional error multiplied by an amplification factor that depends on \(L^s\) (worst-case branching due to at most \(s\) parents per node). Integrating the pointwise errors into a total variation bound for the resulting predictive distributions yields \eqref{eq:finite_sample_bound} with
\begin{equation}
C(L,s,D) = O\big(L^s \cdot D^{3/2}\big),
\end{equation}
where the \(D^{3/2}\) factor arises from converting sup-norm errors on vector-valued latents into \(L_1\)/total-variation bounds on the output distribution (standard measures-of-distance manipulations).

Combining the two stages and accounting for a union bound over nodes yields the stated result.
\end{proof}

\paragraph{Summary.}
The theorem makes explicit the dependence on sparsity \(s\) and Lipschitz constant \(L\); the bound is non-vacuous when \(n\) scales as \(s\log(Nd)\). The constant \(C(L,s,D)\) can be made explicit under additional structure (e.g., bounded-depth DAG, linear structural equations), and in practice can be calibrated empirically.

\subsection{Projection and metric-preservation conditions}
We address when the linear projection \(W\) plus normalization is compatible with vMF parameterization and preserves angular relations approximately.

\begin{lemma}[Approximate angular preservation under random projection]
Let \(\{x\}\subset\mathbb{R}^d\) be a finite set of input vectors and let \(W\in\mathbb{R}^{D\times d}\) be a random Gaussian matrix with entries \(\mathcal{N}(0,1/D)\). For any \(\varepsilon\in(0,1)\), if \(D = O(\varepsilon^{-2}\log |X|)\) then with high probability for all \(u,v\in X\),
\begin{equation}
\big| \langle \widehat{W u}, \widehat{W v}\rangle - \langle u,v\rangle \big| \le \varepsilon,
\label{eq:proj_preserve}
\end{equation}
where \(\widehat{Wu} = Wu/\|Wu\|\) denotes the normalized projected vector. Thus angles are approximately preserved after projection+normalization when \(W\) satisfies a Johnson--Lindenstrauss property.

\end{lemma}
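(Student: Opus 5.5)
The plan is to reduce the statement to the standard distributional Johnson--Lindenstrauss (JL) property of Gaussian matrices, then pass from norm preservation to inner-product preservation by polarization, and finally control the effect of the normalization $Wu\mapsto Wu/\|Wu\|$. Throughout I will read $\langle u,v\rangle$ as the cosine of the angle between $u$ and $v$. In other words, I assume the inputs in $X$ are unit vectors, or replace each $x$ by $x/\|x\|$, which is harmless because $\widehat{Wu}$ is invariant under positive rescaling of $u$. Without this reading the left-hand side of \eqref{eq:proj_preserve} is bounded by $2$ while $\langle u,v\rangle$ is unbounded, and the claim could not hold.

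\textbf{Step 1 (single-vector concentration).} Fix $z\neq 0$. Since $W$ has i.i.d.\ $\mathcal{N}(0,1/D)$ entries, $D\|Wz\|^2/\|z\|^2\sim\chi^2_D$. The Laurent--Massart (or Chernoff) tail bound for $\chi^2$ then gives
\begin{equation*}
\Pr\big(\big|\|Wz\|^2-\|z\|^2\big|>\varepsilon'\|z\|^2\big)\le 2\exp(-c\,\varepsilon'^2 D)
\end{equation*}
for $\varepsilon'\in(0,1)$ and an absolute constant $c>0$.

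\textbf{Step 2 (union bound).} Apply Step 1 to the finite set $Z=\{u:\,u\in X\}\cup\{u+v,\;u-v:\,u,v\in X\}$, which has $|Z|\le 3|X|^2$ elements. A union bound shows that all norms in $Z$ are preserved up to a factor $1\pm\varepsilon'$ with probability at least $1-6|X|^2e^{-c\varepsilon'^2D}$. This probability is at least $1-|X|^{-1}$, say, once $D\ge C\varepsilon'^{-2}\log|X|$, which is the stated scaling $D=O(\varepsilon^{-2}\log|X|)$.

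\textbf{Step 3 (polarization).} On this event, the identity $\langle Wu,Wv\rangle=\tfrac14\big(\|W(u+v)\|^2-\|W(u-v)\|^2\big)$ gives
\begin{equation*}
|\langle Wu,Wv\rangle-\langle u,v\rangle|\le\tfrac{\varepsilon'}{4}\big(\|u+v\|^2+\|u-v\|^2\big)=\varepsilon'
\end{equation*}
for unit $u,v$.

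\textbf{Step 4 (normalization).} This is the step needing the most care, though it is elementary. Write $a=\|Wu\|$ and $b=\|Wv\|$, so that $a^2,b^2\in[1-\varepsilon',1+\varepsilon']$. Then
\begin{equation*}
\Big|\tfrac{\langle Wu,Wv\rangle}{ab}-\langle u,v\rangle\Big|\le \tfrac{1}{ab}\,|\langle Wu,Wv\rangle-\langle u,v\rangle|+|\langle u,v\rangle|\,\Big|\tfrac{1}{ab}-1\Big|.
\end{equation*}
For $\varepsilon'\le 1/2$ we have $1/(ab)\le 1/(1-\varepsilon')\le 2$ and $|1/(ab)-1|\le\varepsilon'/(1-\varepsilon')\le 2\varepsilon'$. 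Using also $|\langle u,v\rangle|\le 1$, the whole expression is at most $4\varepsilon'$.

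\textbf{Conclusion.} Choosing $\varepsilon'=\varepsilon/4$ (for $\varepsilon$ small enough that $\varepsilon'\le 1/2$; larger $\varepsilon$ follows by monotonicity or by adjusting constants) yields \eqref{eq:proj_preserve} with $D=O(\varepsilon^{-2}\log|X|)$.

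I expect the main obstacle to be bookkeeping rather than a deep estimate. One must keep the normalization step uniform over all pairs with constants independent of $|X|$, and be explicit about the unit-norm reading of the statement. The bias $b$ in Eq.~\eqref{eq:spherical_projection} is ignored here, since the lemma concerns $W$ alone.
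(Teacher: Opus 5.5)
Your proposal is correct and follows the same route as the paper's proof: Johnson--Lindenstrauss norm preservation, then a polarization-type identity to pass to inner products, then a perturbation bound to handle the normalization. The paper uses $\|u-v\|^2=\|u\|^2+\|v\|^2-2\langle u,v\rangle$ where you use $u\pm v$, which only changes constants, and you write out the ``straightforward algebra'' the paper omits; your reading of $\langle u,v\rangle$ as the cosine of the angle (i.e., unit-norm inputs) and of the dimension condition as $D\ge C\varepsilon^{-2}\log|X|$ are exactly the implicit assumptions needed for the statement to hold as intended.
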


\begin{proof}
The Johnson--Lindenstrauss lemma guarantees that Euclidean distances are preserved up to \((1\pm\varepsilon)\) distortion under such random projections when \(D=O(\varepsilon^{-2}\log|X|)\). Distortion of pairwise inner products follows from distance preservation (since \(\|u-v\|^2 = \|u\|^2+\|v\|^2 - 2\langle u,v\rangle\)). After projection and normalization, standard perturbation arguments give \eqref{eq:proj_preserve}; details follow from straightforward algebra transforming distance-preservation bounds into cosine-preservation bounds.
\end{proof}

\paragraph{Practical consequence.}
In practice one may initialize \(W\) randomly as above and optionally enforce small spectral norm changes during early training (e.g., via gradient clipping or layer-wise normalization) so that the projected vectors remain in a regime where the vMF concentration head \(\rho_\psi(\cdot)\) produces meaningful \(\kappa\) values.

\subsection{Entropy calibration: statistical guarantee}
We next show that minimizing the calibration loss \(\mathcal{L}_{\mathrm{entropy}}\) yields consistency under standard M-estimation conditions.

\begin{theorem}[Consistency of entropy calibration]
Assume the fused uncertainty estimator \(U_{\mathrm{total}}(X;\omega)\) is parameterized by \(\omega\) in a compact parameter set \(\Omega\), and that \(\mathbb{E}[\widehat{U}_{\mathrm{emp}}(X)^2]<\infty\). Let
\begin{equation}
\widehat{\omega}_n \;=\; \arg\min_{\omega\in\Omega} \frac{1}{n}\sum_{i=1}^n \big(U_{\mathrm{total}}(X_i;\omega) - \widehat{U}_{\mathrm{emp}}(X_i)\big)^2.
\end{equation}
Then, under standard uniform law of large numbers and identifiability of the population minimizer \(\omega^\star\), we have \(\widehat{\omega}_n\to\omega^\star\) in probability and \(U_{\mathrm{total}}(\cdot;\widehat{\omega}_n)\) converges in \(L_2\) to \(U_{\mathrm{total}}(\cdot;\omega^\star)\). Consequently the calibrated uncertainty predictor is consistent.
\end{theorem}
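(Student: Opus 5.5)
This is a standard M-estimation consistency argument in the style of Wald and Newey--McFadden. Write the sample criterion as
\(Q_n(\omega)=\frac1n\sum_{i=1}^n\big(U_{\mathrm{total}}(X_i;\omega)-\widehat U_{\mathrm{emp}}(X_i)\big)^2\)
and the population criterion as
\(Q(\omega)=\mathbb{E}\big[(U_{\mathrm{total}}(X;\omega)-\widehat U_{\mathrm{emp}}(X))^2\big]\).
The argument has three ingredients:
\begin{enumerate}
  \item uniform convergence \(\sup_{\omega\in\Omega}|Q_n(\omega)-Q(\omega)|\to 0\) in probability;
  \item well-separatedness of the minimizer \(\omega^\star\), i.e.\ \(\inf_{\|\omega-\omega^\star\|\ge\epsilon}Q(\omega)>Q(\omega^\star)\) for every \(\epsilon>0\);
  \item the fact that \(\widehat\omega_n\) minimizes \(Q_n\) exactly, or at least up to \(o_p(1)\).
\end{enumerate}
With these in hand, the standard chain gives
\[Q(\widehat\omega_n)\le Q_n(\widehat\omega_n)+o_p(1)\le Q_n(\omega^\star)+o_p(1)\le Q(\omega^\star)+o_p(1).\]
Hence \(Q(\widehat\omega_n)-Q(\omega^\star)\to 0\) in probability. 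By well-separatedness, the event \(\|\widehat\omega_n-\omega^\star\|\ge\epsilon\) has vanishing probability, so \(\widehat\omega_n\to\omega^\star\) in probability.

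For the uniform law, I would make the assumption ``standard uniform law of large numbers'' concrete. The intended sufficient conditions are: \(\omega\mapsto U_{\mathrm{total}}(x;\omega)\) is continuous for almost every \(x\) (true for the fusion network \(g_\omega\), which is continuous in its weights), and there is an envelope \(\sup_{\omega\in\Omega}|U_{\mathrm{total}}(X;\omega)|\le B(X)\) with \(\mathbb{E}[B(X)^2]<\infty\). Then the squared loss is dominated by \(2B(X)^2+2\widehat U_{\mathrm{emp}}(X)^2\), which is integrable by the stated moment hypothesis. Compactness of \(\Omega\) together with the dominated-continuity ULLN (Jennrich, Tauchen) then yields uniform convergence, and the dominated convergence theorem gives continuity of \(Q\). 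On the compact set \(\Omega\), continuity of \(Q\) plus uniqueness of \(\omega^\star\) (the assumed identifiability) implies well-separatedness, since the infimum of \(Q\) over the compact set \(\{\|\omega-\omega^\star\|\ge\epsilon\}\) is attained and exceeds \(Q(\omega^\star)\).

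The \(L_2\) claim is proved in two steps:
\begin{enumerate}
  \item For each fixed \(\omega\),
  \[\|U_{\mathrm{total}}(\cdot;\omega)-U_{\mathrm{total}}(\cdot;\omega^\star)\|_{L_2}^2=\mathbb{E}\big[(U_{\mathrm{total}}(X;\omega)-U_{\mathrm{total}}(X;\omega^\star))^2\big].\]
  This tends to \(0\) as \(\omega\to\omega^\star\), by pointwise continuity and domination by \(4B(X)^2\).
  \item Composing with \(\widehat\omega_n\to\omega^\star\) in probability (continuous mapping theorem) gives \(L_2\) convergence in probability. Here the \(L_2\) norm is taken over a fresh \(X\), conditionally on the sample.
\end{enumerate}
Consistency of the calibrated predictor follows: if \(\widehat U_{\mathrm{emp}}\) is conditionally unbiased for the target uncertainty, then \(U_{\mathrm{total}}(\cdot;\omega^\star)\) is the \(L_2\) projection of that target onto the model class. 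This projection equals the conditional variance whenever the class is well-specified, which connects the result to the completeness summary following Proposition~\ref{prop:uncert_ident}.

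The main obstacle is that \(\widehat U_{\mathrm{emp}}\) is itself built from residuals of a fitted model, so the summands are not exactly i.i.d. I would handle this by treating \(\widehat U_{\mathrm{emp}}\) as computed on held-out windows independent of the calibration sample, as the paper's definition suggests. Failing that, I would add a term showing that the sup over \(\omega\) of the induced perturbation of \(Q_n\) vanishes. By Cauchy--Schwarz this is bounded by a multiple of \(\big(\tfrac1n\sum_i(\widehat U_{\mathrm{emp}}-U_{\mathrm{emp}})^2\big)^{1/2}\), so it suffices that the residual proxy be \(L_2\)-consistent.
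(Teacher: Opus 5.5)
Your proposal is correct and takes the same route as the paper. The paper's proof simply invokes classical M-estimation theory: uniform convergence of the empirical risk (from compactness and continuity) plus uniqueness of the population minimizer yields $\widehat{\omega}_n\to\omega^\star$ in probability. You fill in what the paper leaves implicit: the well-separatedness argument from compactness, a concrete envelope condition behind the ULLN (itself implied by the usual integrable-envelope condition on the squared loss together with $\mathbb{E}[\widehat{U}_{\mathrm{emp}}(X)^2]<\infty$), and the $L_2$ convergence of $U_{\mathrm{total}}(\cdot;\widehat{\omega}_n)$ via dominated convergence and the continuous mapping theorem, a claim the paper's proof never actually addresses.
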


\begin{proof}
This follows from classical M-estimator theory: the empirical risk converges uniformly to the population risk by the uniform law of large numbers (assumed via compactness and continuity), the population risk has unique minimizer by identifiability, and thus the empirical minimizer converges in probability to the population minimizer. Standard references apply (e.g. van der Vaart).
\end{proof}

\subsection{Algorithmic convergence}
Finally, we comment on convergence of the alternating updates in Algorithm~\ref{alg:sphunc}.

\begin{theorem}[Convergence to stationary point under blockwise convexity]
Suppose the joint objective \(\mathcal{L}(\theta,\phi,\omega)\) is continuously differentiable and that for each block of parameters (encoders \(\theta\), aleatoric head \(\phi\), fusion head \(\omega\)) the partial objective is convex when other blocks are fixed. Then the block-coordinate descent procedure that cyclically updates each block converges to a stationary point of \(\mathcal{L}\).
\end{theorem}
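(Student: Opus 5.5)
The plan is the classical Bertsekas--Tseng argument for block-coordinate (Gauss--Seidel) minimization. Taken literally, the statement is too strong: Powell's three-block example gives a $C^1$ function, convex in each block, on which cyclic exact minimization cycles without approaching a stationary point. I would therefore prove it under the mild extra hypotheses the argument needs, and say so explicitly:
\begin{enumerate}
\item $\mathcal{L}$ is bounded below and the level set $\{\mathcal{L}\le\mathcal{L}(\theta^0,\phi^0,\omega^0)\}$ is compact.
\item Each block subproblem has a unique minimizer, for example by strict convexity. Alternatively, each block update is made proximal by adding $\tfrac{\tau}{2}\|\cdot-\text{previous iterate}\|^2$, which is the practical regularized version of Algorithm~\ref{alg:sphunc}.
\end{enumerate}
The conclusion is then that every limit point of the iterates is stationary, and that the iterates converge to a stationary point when that point is isolated.

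\textbf{Step 1 (descent and boundedness).} Exact block minimization gives the monotone chain
\[
\mathcal{L}(z^{k+1})\le \mathcal{L}(\theta^{k+1},\phi^{k},\omega^k)\le\mathcal{L}(z^k),
\qquad z^k=(\theta^k,\phi^k,\omega^k).
\]
Since $\mathcal{L}$ is bounded below, the values $\mathcal{L}(z^k)$ converge to some $\mathcal{L}^\ast$. The iterates stay in the compact level set, so limit points exist.

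\textbf{Step 2 (successive differences vanish).} Take a subsequence $z^{k_j}\to\bar z$. I must show that the intermediate points of each sweep, obtained after updating one or two blocks, converge to the same $\bar z$.
\begin{itemize}
\item In the proximal variant this is immediate. The decrease per block is at least $\tfrac{\tau}{2}\|\Delta\|^2$, so $\sum_k\|z^{k+1}-z^k\|^2<\infty$.
\item In the unique-minimizer variant I would argue by contradiction, following Bertsekas' Prop.~2.7.1. Suppose the step $\|\theta^{k_j+1}-\theta^{k_j}\|$ stays bounded away from zero. Normalize the step direction and move a fixed small fraction along it. Convexity of the block objective keeps the objective value squeezed between $\mathcal{L}(z^{k_j+1})$ and $\mathcal{L}(z^{k_j})$, and both tend to $\mathcal{L}^\ast$. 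In the limit this produces two distinct minimizers of the block-$\theta$ subproblem at $\bar z$, which contradicts uniqueness. The same argument is then repeated for $\phi$ and $\omega$.
\end{itemize}

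\textbf{Step 3 (blockwise optimality passes to the limit).} By construction,
\[
\mathcal{L}(\theta^{k+1},\phi^k,\omega^k)\le\mathcal{L}(\theta,\phi^k,\omega^k)\quad\text{for all }\theta,
\]
and analogously for $\phi$ and $\omega$ at the later intermediate points. Step 2 says all intermediate points converge to $\bar z$ along the subsequence. Continuity of $\mathcal{L}$ then gives that $\bar\theta$ minimizes $\mathcal{L}(\cdot,\bar\phi,\bar\omega)$, and likewise for the other two blocks. Each block problem is convex and differentiable, so its first-order condition gives $\nabla_\theta\mathcal{L}(\bar z)=0$, $\nabla_\phi\mathcal{L}(\bar z)=0$ and $\nabla_\omega\mathcal{L}(\bar z)=0$. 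Because $\mathcal{L}$ is $C^1$, the full gradient is the stack of these partial gradients, so $\nabla\mathcal{L}(\bar z)=0$ and $\bar z$ is stationary.

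\textbf{Step 4 (from limit points to convergence).} When the stationary points in the level set are isolated, I would combine $\|z^{k+1}-z^k\|\to0$ with Ostrowski's theorem on sequences whose successive differences vanish. The set of limit points is then connected, so it is a single point and the whole sequence converges. Without isolation, one can instead add a Kurdyka--Łojasiewicz assumption, as in Attouch--Bolte for proximal block schemes.

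The main obstacle is Step 2 in the non-proximal case. This is exactly where Powell's counterexample breaks the naive claim, so I would present the proximal variant as the primary proof and the strict-convexity variant as the alternative.
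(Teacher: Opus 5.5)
Your proposal is correct, and your diagnosis is right: with three blocks $(\theta,\phi,\omega)$, Powell's example shows that $C^1$ plus blockwise convexity cannot give the stated conclusion.

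\textbf{How your route differs from the paper's.} The paper's proof is only a citation-level sketch. It asserts that continuous differentiability and blockwise convexity make every limit point a coordinate-wise minimizer, ``by showing nonincreasing objective and compactness of level sets.'' That sketch has three problems.
\begin{enumerate}
\item It uses compact level sets, which are not among the hypotheses.
\item It skips exactly your Step~2. Without unique block minimizers or a proximal term, nothing forces the intermediate points of a sweep to converge to the same limit, and this is where Powell's example breaks the argument. Two blocks would suffice without uniqueness (Grippo--Sciandrone), but the paper has three.
\item At best it yields stationarity of limit points, not the convergence the statement claims. Your Step~4 supplies that, via Ostrowski plus isolation or a KL assumption.
\end{enumerate}
Compactness alone does not rescue the paper's version. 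Take $h(t)=(|t|-2)_+^4$, which is convex, $C^1$ and zero on $[-2,2]$. Adding $h(x)+h(y)+h(z)$ to Powell's function makes it coercive but leaves the iterates unchanged, since they stay in $[-2,2]^3$ and every block minimizer lies there. So your hypothesis (2) is the essential one. In short, the paper's route buys brevity under hypotheses that do not suffice, while yours gives a valid theorem at the cost of extra hypotheses that are genuinely necessary.

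\textbf{Two small repairs.}
\begin{enumerate}
\item In the proximal variant, the Step~3 inequality is not ``by construction.'' What you actually have is $\mathcal{L}(\theta^{k+1},\phi^k,\omega^k)+\tfrac{\tau}{2}\|\theta^{k+1}-\theta^k\|^2\le\mathcal{L}(\theta,\phi^k,\omega^k)+\tfrac{\tau}{2}\|\theta-\theta^k\|^2$. Passing to the limit gives $\mathcal{L}(\bar z)\le\mathcal{L}(\theta,\bar\phi,\bar\omega)+\tfrac{\tau}{2}\|\theta-\bar\theta\|^2$ for all $\theta$. The proximal term has zero gradient at $\bar\theta$, so $\nabla_\theta\mathcal{L}(\bar z)=0$ still follows, and by convexity $\bar\theta$ is a block minimizer.
\item In the unique-minimizer variant, Step~2 shows vanishing steps only along convergent subsequences. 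Ostrowski needs $\|z^{k+1}-z^k\|\to0$ along the whole sequence. Add the compactness argument: any subsequence with steps bounded away from zero has a convergent further subsequence, which contradicts Step~2.
\end{enumerate}
Finally, Algorithm~\ref{alg:sphunc} takes stochastic gradient steps, not proximal block minimizations. Present the proximal scheme as a variant of the theorem's idealized procedure, not as Algorithm~\ref{alg:sphunc} itself.
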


\begin{proof}
This is a standard result in block coordinate descent literature: continuous differentiability together with blockwise convexity ensures that every limit point of the sequence of iterates is a coordinate-wise minimizer and hence a stationary point. The proof follows by showing nonincreasing objective and compactness of level sets.
\end{proof}

\section{Visualizations}
\label{sec:visualizations}

Figure~\ref{fig:reliability_snare}, Figure~\ref{fig:reliability_pheme} and Figure~\ref{fig:reliability_amigos} present reliability (calibration) diagrams for the three main datasets; each compares predicted confidence to observed accuracy and includes the perfect-calibration diagonal. Figure~\ref{fig:uncertainty_decomp} visualizes the decomposition of uncertainty into epistemic and aleatoric components. Training dynamics and calibration behaviour are shown in Figure~\ref{fig:training_losses} and Figure~\ref{fig:training_ece}. Figure~\ref{fig:epi_vs_error} inspects the relationship between epistemic entropy and empirical error. Learned causal links (top edges) are illustrated in Figure~\ref{fig:causal_graph}. Interventional analysis (intervention strength vs.\ interventional entropy) is plotted in Figure~\ref{fig:intervention_hcausal}. Robustness to feature dropout appears in Figure~\ref{fig:dropout_curve}. Component ablations are summarized in Figure~\ref{fig:ablation_bars}. The learned vMF concentration $\kappa$ distribution is shown in Figure~\ref{fig:kappa_hist} (histogram) and Figure~\ref{fig:kappa_box} (boxplot). Computational cost comparisons are reported in Figure~\ref{fig:computational_cost}.

\end{document}